\title{Federated Learning Can Find Friends That Are Advantageous}
\date{} 					
\author{
    \hspace{1mm}Nazarii Tupitsa
    \\
	Department of Machine Learning\\
	MBZUAI\\
	\And
    \hspace{1mm}Samuel Horv\'{a}th 
    \\
	Department of Machine Learning\\
	MBZUAI\\
	\And
    \hspace{1mm}Martin Tak\'{a}\v{c}
    \\
	Department of Machine Learning\\
	MBZUAI\\
	\And
    \hspace{1mm}Eduard Gorbunov
    \\
	Department of Machine Learning\\
	MBZUAI\\
}
\theoremstyle{plain}
\newtheorem{theorem}{Theorem}[section]
\theoremstyle{definition}
\newtheorem{assumption}[theorem]{Assumption}
\theoremstyle{remark}
\definecolor{niceblue}{rgb}{0.0,0.19,0.56}
\newcommand{\oset}[3][B]{\mathrel{\eqmakebox[#1]{$\overset{#2}{#3}$}}} 
\newcommand{\eqdef}{\stackrel{\text{def}}{=}}
\newcommand{\dist}{\mathop{\mathrm{dist}}\nolimits}
\newcommand{\E}{\mathbb{E}}
\newcommand{\R}{\mathbb{R}}
\newcommand{\cD}{{\cal D}}
\newcommand{\cG}{{\cal G}}
\newcommand{\cN}{{\cal N}}
\newcommand{\cO}{{\cal O}}
\newcommand{\cS}{{\cal S}}
\newcommand\swapifbranches[3]{#1{#3}{#2}}
\patchcmd{\DeclarePairedDelimiter}{\@ifstar}{\swapifbranches\@ifstar}{}{}
\DeclarePairedDelimiterX{\inp}[2]{\langle}{\rangle}{#1, #2}
\DeclarePairedDelimiterX{\abs}[1]{\lvert}{\rvert}{#1}
\DeclarePairedDelimiterX{\roundup}[1]{\lceil}{\rceil}{#1}
\DeclarePairedDelimiterX{\norm}[1]{\lVert}{\rVert}{#1}
\DeclarePairedDelimiterX{\cbr}[1]{\{}{\}}{#1} 
\DeclarePairedDelimiterX{\rbr}[1]{(}{)}{#1} 
\DeclarePairedDelimiterX{\sbr}[1]{[}{]}{#1} %
\DeclareMathOperator*{\argmin}{arg\,min}
\def\mI{{\bm{I}}}
\newcommand{\f}{f}
\newcommand{\xiv}{\bm{\xi}}
\newcommand{\g}{g}
\newcommand{\x}{x}
\newcommand{\y}{y}
\newcommand{\w}{w}
\newcommand{\n}{{n}}
\newcommand{\sumin}{\sum_{i=0}^{\n-1}}
\newcommand{\gn}[1][]{G_{#1}}
\newcommand{\algname}[1]{{\texttt{#1}}\xspace}
\newcommand{\answerYes}[1][]{\textcolor{blue}{[Yes] #1}}
\newcommand{\answerNA}[1][]{\textcolor{gray}{[NA] #1}}
\newcommand{\Algn}{\algname{MeritFed}}
\begin{document}

\maketitle

\begin{abstract}
In Federated Learning (FL), the distributed nature and heterogeneity of client data present both opportunities and challenges. While collaboration among clients can significantly enhance the learning process, not all collaborations are beneficial; some may even be detrimental. In this study, we introduce a novel algorithm that assigns adaptive aggregation weights to clients participating in FL training, identifying those with data distributions most conducive to a specific learning objective. We demonstrate that our aggregation method converges no worse than the method that aggregates only the updates received from clients with the same data distribution. Furthermore, empirical evaluations consistently reveal that collaborations guided by our algorithm outperform traditional FL approaches. This underscores the critical role of judicious client selection and lays the foundation for more streamlined and effective FL implementations in the coming years.
\end{abstract}
\section{Introduction}
Federated Learning (FL) introduces an innovative paradigm redefining traditional machine learning workflow. Instead of centrally pooling sensitive client data, FL allows for model training on decentralized data sources stored directly on client devices
\citet{konevcny2016federated,zhang2021survey,li2020federated,beznosikov2021decentralized}. In this approach, rather than training Machine Learning (ML) models in a centralized manner, a shared model is distributed to all clients. Each client then performs local training, and model updates are exchanged between clients and the FL orchestrator (often referred to as the master server) \citet{mcmahan2017communication,shokri2015privacy,karimireddy2020scaffold}.
\\[5pt]
{\bf Personalized Federated Learning (PFL).}
The concept of PFL \citet{collins2021exploiting, hanzely2020lower, sadiev2022decentralized,AlmansooriColl} has been gaining traction. In this framework, each client, often referred to as an agent, takes part in developing their own personalized model variant. This tailored training approach leverages local data distributions, aiming to design models that cater to the distinct attributes of each client's dataset \citet{fallah2020personalized}.
However, a prominent challenge arises in this decentralized training landscape due to the data's non-IID (independent and identically distributed) nature across various clients. Data distributions that differ considerably can have a pronounced impact on the convergence and generalization capabilities of the trained models.
While certain client-specific data distributions might strengthen model performance, others could prove detrimental, introducing biases or potential adversarial patterns.
Additionally, within the personalized federated learning paradigm, the emphasis on crafting individualized models could inadvertently heighten these data disparities
\citet{kairouz2021advances}. Consequently, this may lead to models that deliver subpar or, in some cases, incorrect results when applied to wider or diverse datasets \citet{kulkarni2020survey}.
\\[5pt]
{\bf Collaboration as a service.} In this paper, we introduce a modified protocol for FL that deviates from a strictly personalized approach. Rather than focusing solely on refining individualized models, our approach seeks to harness the advantages of distinct data distributions, curb the detrimental effects of outlier clients, and promote collaborative learning. Through this innovative training mechanism, our algorithm discerns which clients are optimal collaborators to ensure faster convergence and potentially better generalization.


\subsection{Setup}
We assume that there are $n$ clients participating in the training and consider the first one as a target client. The goal is to train the model for this client, i.e., we consider
\begin{equation}
    \textstyle{\min}_{x\in\R^d} \left\{f(x) \equiv f_1(x) := \E_{\xi_1 \sim \cD_1} [f_{\xi_1}(x)]\right\}, \label{eq:main_problem}
\end{equation}
where $f_{\xi_1}:\R^d \to \R$ is the loss function on sample $\xi_1$ and $f:\R^d \to \R$ is an expected loss. Other clients can also have data sampled from similar distributions, but we also allow adversarial participants, e.g., Byzantines~\citet{lamport1982byzantine, lyu2020privacy}. That is, some clients can be beneficial for the training in certain stages, but they are not assumed to be known apriori.

The considered target client scenario naturally arises in FL on medical image data. In such applications, different hospitals naturally have different data distributions (e.g., due to the differences in the equipment). Therefore, the data coming from one clinic can be useless to another clinic. At the same time, several clinics can have similar data distributions.

\subsection{Contribution}

Our main contributions are listed below.
\begin{itemize}[noitemsep,topsep=0pt,left=0pt]
    \item \textbf{New method: \Algn.} We proposed a new method called Merit-based Federated Averaging for Diverse Datasets (\Algn) that aims to solve \eqref{eq:main_problem}. The key idea is to use the stochastic gradients received from the clients to adjust the weights of averaging through the inexact solving of the auxiliary problem of minimizing a validation loss as a function of aggregation weights. 

    \item \textbf{Provable convergence under mild assumptions.} We prove that \Algn converges not worse than \algname{SGD} that averages only the stochastic gradients received from clients having the same data distribution (these clients are not known apriori) for smooth non-convex and Polyak-Lojasiewicz functions under standard bounded variance assumption.

    \item \textbf{Utilizing all possible benefits.} We numerically show that \Algn can even benefit from collaboration with clients having different data distributions when these distributions are close to the target one. That is, \algname{MeritFed} automatically detects beneficial clients at any stage of training. Moreover, we illustrate the Byzantine robustness of the proposed method even when Byzantine workers form a majority.
\end{itemize}





\subsection{Related work}

{\bf Federated optimization.} Standard results in distributed/federated optimization focus on the  problem:
\begin{equation}
    \textstyle{\min}_{x\in \R^d}\frac{1}{n}\sum_{i=1}^n f_i(x), \label{eq:standard_ERM}
\end{equation}
where $f_i(x)$ represents either expected or empirical loss on the client $i$. This problem significantly differs from \eqref{eq:main_problem}, since one cannot completely ignore the updates from some clients to achieve a better solution. Typically, in this case, communication is the main bottleneck of the methods for solving such problems. To address this issue one can use communication compression \citet{alistarh2017qsgd, stich2018sparsified, mishchenko2019distributed}, local steps \citet{stich2018local, khaled2020tighter, kairouz2021advances, wang2021field, mishchenko2022proxskip,sadiev2022decentralized}, client importance sampling \citet{cho2020client,nguyen2020fast,ribero2020communication,lai2021oort,luo2022tackling,chen2022optimal}, or decentralized protocols \citet{lian2017can, song2022communication}. However, these techniques are orthogonal to what we focus on in our paper, though incorporating them into our algorithm is a prominent direction for future research.

{\bf Clustered FL.} Another way of utilizing benefits from the other clients is the clustering of clients based on some information about their data or personalized models. \citet{tang2021personalized} propose a personalized formulation with $\ell_2$-regularization that attracts a personalized model of a worker to the center of the cluster that this worker belongs to. A similar objective is studied by \citet{ma2022convergence}. \citet{ghosh2020efficient} develop an algorithm that updates clusters's centers using the gradients of those clients that have the smallest loss functions at the considered cluster's center. It is worth mentioning that, in contrast to our work, the mentioned works modify the personalized objective to illustrate some benefits of collaboration while we focus on the pure personalized problem of the target client. Under the assumption that the data distributions of each client are mixtures of some finite set of underlying distributions, \citet{marfoq2021federated} derive the convergence result for the Federated Expectation-Maximization algorithm. This is the closest work to our setup in the Clustered FL literature. However, in contrast to \citet{marfoq2021federated}, we do not assume that the gradients are bounded, and the local loss functions have bounded gradient dissimilarity. Another close work to ours is \citep{fraboni2021clustered}, where the authors consider so-called clustered-based sampling. However, \citet{fraboni2021clustered} also make a non-standard assumption on the bounded dissimilarity of the local loss functions, while one of the key properties of our approach is its robustness to arbitrary clients' heterogeneity. \citet{li2020federatedopt} is also a relevant paper in the sense that not all workers are selected for aggregation at each communication round (due to the client sampling). However, this work focuses on weighted empirical risk minimization (with weights proportional to the dataset size), i.e.,~\citet{li2020federatedopt} consider a different problem.

{\bf Non-uniform averaging.} There are also works studying the convergence of distributed \algname{SGD}-type methods that use non-uniform (but fixed) weights of averaging. \citet{ding2022collaborative} propose a method to detect collaboration partners and adaptively learn "several" models for numerous heterogeneous clients. Directed graph edge weights are used to calculate group partitioning. Since the calculation of optimal weights in their approach is based on similarity measures between clients' data, it is unclear how to compute them in practice without sacrificing the users's data privacy. \citet{even2022sample} develop and analyze another approach for personalized aggregation, where each client filters gradients and aggregates them using fixed weights. The optimal weights also require estimating the distance between distributions (or communicating empirical means among all clients and estimating effective dimensions). 
Both works do not consider weights evolving in time, which is one of the key features of our method.
 
Non-fixed weights are considered in~\citet{wu2021fast}, but the authors focus on non-personalized problem formulation. In particular, \citet{wu2021fast} propose the method called \algname{FedAdp} that uses cosine similarity between gradients and the Gompertz function for updating aggregation weights. Under the strong bounded local gradient dissimilarity assumption\footnote{\citet{wu2021fast} assume that there exist constants $A,B > 0$ such that $A\|\nabla f(x)\| \leq \|\nabla f_i(x)\| \leq B\|\nabla f(x)\|$ for every client $i \in [n]$ and any $x$, where $f(x) = \frac{1}{n}\sum_{i=1}^n f_i(x)$.}, \citet{wu2021fast} derive a non-conventional upper bound (for the loss function at the last iterate of their algorithm) that does not necessarily imply convergence of the method. \citet{zhang2020personalized} introduces \algname{FedFomo} that uses additional data to adjust the weights of aggregation in Federated Averaging. In this context, \algname{FedFomo} is close to \algname{MeritFed}. However, the weights selection formulas significantly differ from ours. In particular,~\citet{zhang2020personalized} do not relate the proposed weights with the minimization problem from Line~\ref{lst:line:aux_problem} of our method. In addition, there is no theoretical convergence analysis of \algname{FedFomo}.

{\bf Bi-level optimization.} 
Taking into account that we want to solve problem \eqref{eq:main_problem} using the information coming from not only the target client, it is natural to consider the following bi-level optimization (BLO) problem formulation:
\begin{eqnarray}
    \min_{\w\in \Delta^\n_1}&& \f(\x^*(\w)), \label{eq:blo} \\ \text{s.t.}&& \x^*(\w) \in \textstyle{\argmin}_{\x \in \R^d} \textstyle{\sum}_{i=1}^n \w_i \f_{i}(\x), \label{eq:LL_problem}
\end{eqnarray}
where $\Delta_1^n$ is a unit simplex in $\R^n$: $\Delta_1^n =  \{\w \in \R^n\mid \sum_{i=1}^n w_i = 1,\; w_i \geq 0 \;\forall i\in [n]\}$. The problem in \eqref{eq:blo} is usually called the upper-level problem (UL), while the problem in \eqref{eq:LL_problem} is the lower-level (LL) one. Since in our case $f(x) \equiv f_1(x)$, \eqref{eq:blo}-\eqref{eq:LL_problem} is equivalent to \eqref{eq:main_problem}. In the general case, this equivalence does not always hold and, in addition, function $f$ is allowed to depend on $w$ not only through $x^*$. All these factors make the general BLO problem hard to solve. The literature for this general class of problems is quite rich, and we cover only closely related works.

The closest works to ours are \citet{chen2021weighted}, which propose so-called Target-Aware  Weighted Training (\algname{TAWT}), and its extension to the federated setup \citet{huang2022federated}. Their analysis relies on the existence of weights $\w$, such that $\dist\rbr*{\sum_{i=1}^n\w_i\cD_i , \cD_{\text{target}}} = 0$ in terms so-called representation-based distance \citet{chen2021weighted}, which is also zero in our case, or existence of identical neighbors. However, the analysis is based on BLO's techniques and requires a hypergradient estimation, i.e., $\nabla_w f(x^*(w), w)$, which is usually hard to compute. To avoid the hypergradient calculation, \citet{chen2021weighted} also propose a heuristic based on the usage of cosine similarity between the clients' gradients, which makes the implementation of the algorithm similar to \algname{FedAdp}~\citet{wu2021fast}.
 
In fact, there are two major difficulties in estimating hypergradient. The first one is that the optimal solution $\x^*(\w)$ of the lower problem for every given $\w$ needs to be estimated. The known approaches iteratively update the lower variable $\x$ multiple times before updating $\w$, which causes high communication costs in a distributed setup. A lot of methods~\citet{ghadimi2018approximation,hong2020two,chen2021closing,ji2021bilevel,ji2022will} are proposed to effectively estimate $\x^*(\w)$ before updating $\w$, but anyway the less precise estimate slowdowns the convergence. The second obstacle is that hypergradient calculation requires second-order
derivatives of $\f_i(\w,\x)$. Many existing methods~\citet{chen2022single,dagreou2022framework} use an explicit second-order derivation of $\f_i(\w,\x)$ with a major focus on efficiently estimating its Jacobian and inverse Hessian, which is computationally expensive itself, but also dramatically increases the communication cost in a distributed setup.
A number of methods~\citet{chen2022single,li2022fully,dagreou2022framework} avoid directly estimating its second-order computation and only use the first-order information of both upper and lower objectives, but they still have high communication costs and do not exploit our assumptions.
For a more detailed review of BLO, we refer to~\citet{zhang2023introduction,liu2021investigating,chen2022gradient}.

\section{\Algn: Merit-Based Federated Learning For Diverse Datasets}\label{sec:meritfed}
Recall that the primary objective the target client seeks to solve is given by \eqref{eq:main_problem} where $n$ workers are connected with a parameter-server. Standard Parallel \algname{SGD}
\begin{equation}
  \textstyle  x^{t+1} = x^t - \frac{\gamma}{n}\sum_{i=1}^n g_i(x^t, \xiv_i), \label{eq:SGD}
\end{equation}
where $g_i(x^t, \xiv_i)$ denotes a stochastic gradient (unbiased estimate of $\nabla f_i(x^t)$) received from client $i$, cannot solve problem \eqref{eq:main_problem} in general, since workers $\{2,\ldots, n\}$ do not necessarily have the same data distribution as the target client. This issue can be solved if we modify the method as follows:
\begin{equation}
  \textstyle  x^{t+1} = x^t - \frac{\gamma}{|\cG|}\sum_{i\in \cG} g_i(x^t, \xiv_i), \label{eq:SGD_ideal}
\end{equation}
where $\cG$ denotes the set of workers that have the same data distribution as the target worker. However, the group $\cG$ is not known in advance. This aspect makes the method from \eqref{eq:SGD_ideal} impractical. Moreover, this method ignores potentially useful vectors received from the workers having different yet similar data distributions.

\begin{algorithm}[t]
    \caption{\Algn: Merit-based Federated Learning for Diverse Datasets}\label{alg:meritfed}
    \begin{algorithmic}[1] 
        \State {\bfseries Input:} Starting point $\x^0 \in \R^d$, stepsize $\gamma > 0$
        \For{$t=0,...$}
            \State server sends $\x^t$ to each worker
            \ForAll{\textbf{workers} $i= 1,\dots,\n$ \textbf{in parallel}}
                \State compute stochastic gradient 
                $\g_i(\x^t, \xiv_i)$  from local data and \textbf{send} $\g_i(\x^t, \xiv_i)$ to the server 
            \EndFor            
        \State
        \label{lst:line:aux_problem} 
        $\w^{t+1} \approx \argmin\limits_{w\in \Delta_1^n}f\left(x^t - \gamma \sum\limits_{i=1}^n \w_i  \g_i(\x^t, \xiv_i)\right)$
        \State $\x^{t+1} = \x^t - \gamma \sum\limits_{i=1}^n \w^{t+1}_i  \g_i(\x^t, \xiv_i)$. 
        \EndFor
    \end{algorithmic}
\end{algorithm}
\subsection{The Proposed Method}

We develop Merit-based Federated Learning for Diverse Datasets (\Algn; see Algorithm~\ref{alg:meritfed}) aimed at solving~\eqref{eq:main_problem} and safely gathering all potential benefits from collaboration with other clients. As in Parallel \algname{SGD} all clients are required to send the stochastic gradients to the server. However, in contrast to uniform averaging of the received stochastic gradients, \Algn uses the weights $w^t$ from the unit simplex $\Delta_1^n$ that are updated at each iteration. In particular, the new vector of weights $w^{t+1} \in \R^n$ at iteration $t$ approximates $\argmin_{w\in \Delta_1^n}f(x^t - \gamma \sum_{i=1}^n \w_i  \g_i(\x^t, \xiv_i))$. Then, the server uses the obtained weights for averaging the stochastic gradients and updating $x^t$. 

\subsection{Auxiliary Problem in Line~\ref{lst:line:aux_problem}}\label{s:aux}
In general, solving the problem in Line~\ref{lst:line:aux_problem} is not easier than solving the original problem \eqref{eq:main_problem}. Therefore, we present two particular approaches for efficiently addressing this problem.

{\bf Approach 1: use fresh data.} Let us assume that the target client can obtain new samples from distribution $\cD_1$ at any moment in time. To avoid any risk of compromising clients' privacy, the target client dataset should be stored only on the target client, and stochastic gradients received from other clients cannot be directly sent to the target client. To satisfy these requirements, one can approximate 
\begin{equation}\textstyle
    \argmin_{\w \in \Delta_1^n} \left\{\varphi_t(w) \equiv f \left(x^t - \gamma \sum_{i=1}^n \w_i  \g_i(\x^t, \xiv_i)\right)\right\}
\end{equation}
using \emph{zeroth-order}\footnote{In this case, the server can ask the target client to evaluate loss values at the required points without sending the stochastic gradients received from other workers.} Mirror Descent (or its accelerated version) \citet{duchi2015optimal, shamir2017optimal, gasnikov2022power}: 
\begin{equation}\textstyle
    w^{k+1} = \argmin_{w \in \Delta_1^n}\left\{\alpha\langle \tilde{g}^k, w\rangle + D_r(w, w^k)\right\}, \label{eq:zo_MD}
\end{equation}
where $\alpha > 0$ is the stepsize, $\tilde{g}^k$ is a finite-difference approximation of the directional derivative of sampled function
\begin{equation}\textstyle
    \varphi_{t,\xi^k}(w) \eqdef f_{\xi^k} \left(x^t - \gamma \sum_{i=1}^n \w_i  \g_i(\x^t, \xiv_i)\right),
\end{equation} 
where $\xi^k$ is a fresh sample from the distribution $\cD_1$ independent from all previous steps of the method, e.g., one can use $\tilde{g}^k = \frac{n(\varphi_{t,\xi^k}(w^k + he) - \varphi_{t,\xi^k}(w^k - he))}{2h}$ 
for $h > 0$ and $e$ being sampled from the uniform distribution on the unit Euclidean sphere, and $D_r(w,w^k) = r(w) - r(w^k) - \langle \nabla r(w^k), w - w^k \rangle$ is the Bregman divergence associated with a $1$-strongly convex function $r$. Although, typically, the oracle complexity bounds for gradient-free methods have $\cO(n)$ dependence on the problem dimension \citet{gasnikov2022randomized}, one can get just $\cO(\log^2(n))$, in the case of the optimization over the probability simplex \citet{shamir2017optimal, gasnikov2022power}. More precisely, if $f$ is $M_2$-Lipschitz w.r.t.\ $\ell_2$-norm and convex, then one can achieve $\E[\varphi_t(w) - \varphi_t(w^*)] \leq \delta$ using $\cO(\nicefrac{M_2^2\log^2(n)}{\delta^2})$ computations of $\varphi$, where $R$ is $\ell_1$-distance between the starting point and the solution \citet{gasnikov2022power} and prox-function is chosen as $r(w) = \sum_{i=1}^n w_i\log(w_i)$, which is $1$-strongly convex w.r.t.\ $\ell_1$-norm.
\\[5pt]
{\bf Approach 2: use additional validation data.} Alternatively, one can assume that the target client has an additional validation dataset $\widehat{D}$ sampled from $\cD_1$. Then, instead of function $f$ in Line~\ref{lst:line:aux_problem}, one can approximately minimize
\begin{equation}\textstyle
    \widehat f(x) = \frac{1}{|\widehat D|}\sum_{\xi \in \widehat D} f_{\xi}(x), \label{eq:validation_loss}
\end{equation}
which under certain conditions provably approximates the original function $f(x)$ with any predefined accuracy if the dataset $\widehat D$ is sufficiently large \citet{shalev2009stochastic, feldman2019high}. More precisely, the worst-case guarantees (e.g.,~\citet{liu2024new}) imply that to guarantee $\mathbb{E}[f(\widehat x^\ast) - f(x^\ast)] \leq \delta$, where $\widehat x^\ast \in \arg\min_{x\in \mathbb{R}^d} \widehat f(x)$ and $x^\ast \in \arg\min_{x\in \mathbb{R}^d} f(x)$, the validation dataset should be of the size $|\mathcal{\widehat D}| \sim \max\left\lbrace \nicefrac{L}{\mu}, \nicefrac{1}{\mu\delta} \right\rbrace$ under the assumption that $f_\xi (x)$ is $\mu$-strongly convex. However, as we observe in our experiments, \algname{MeritFed} works well even with a relatively small size of the validation dataset for non-convex problems.

{\bf Memory usage.} It is also worth mentioning that \algname{MeritFed} requires the server to store $n$ vectors at each iteration for solving the problem in Line~\ref{lst:line:aux_problem}. While standard \algname{SGD} does not require such a memory, closely related methods --- \algname{FedAdp} and \algname{TAWT} --- also require the server to store $n$ vectors for the computation of the weights for aggregation. However, for modern servers, this is not an issue.

\section{Convergence Analysis}\label{sec:convergence}

In our analysis, we rely on the standard assumptions for non-convex optimization literature.

\begin{assumption}\label{as:bounded-var}
 For all $i\in \cG$ the stochastic gradient $\g_i(\x, \xiv_i)$ is an unbiased estimator of $\nabla\f_i(\x)$ with bounded variance, i.e., $\E_{\xiv_i} [\g_i(\x, \xiv_i)] = \nabla\f_i(\x)$ and for some $\sigma \geq 0$
    \begin{equation}  \label{eq:xi-var}
    \textstyle
        \E_{\xiv_i} \left[\norm*{ \g_i(\x, \xiv_i) -  \nabla\f_i(\x)}^2\right] \leq \sigma^2.
    \end{equation}
\end{assumption}

The above assumption is known as the bounded variance assumption. It is classical for the analysis of stochastic optimization methods, e.g., see \citet{nemirovski2009robust, juditsky2011solving}.

Next, we assume the smoothness of the objective.
\begin{assumption}\label{as:lipschitzness} 
    $\f$ is $L$-smooth, i.e., $\forall\; \x,\y \in \R^d$
	\begin{equation}\label{eq:lipschitzness}\textstyle
		\f\rbr{\x} \le \f\rbr{\y} + \inp*{\nabla \f\rbr{\y}}{\x - \y} + \frac{L}{2}\norm*{\x - \y}^2. \tag{Lip}
	\end{equation}
\end{assumption}
We also make the following (optional) assumption called Polyak-{\L}ojasiewicz (P{\L}) condition \citet{polyak1963gradient, lojasiewicz1963topological}.
\begin{assumption}\label{as:pl}
    $\f$ satisfies Polyak-{\L}ojasiewicz (P{\L}) condition with parameter $\mu$, i.e., for $\mu \ge 0$
    \begin{equation} \label{eq:pl}\textstyle
        \f^* \ge \f\rbr*{\x} - \frac{1}{2\mu}\norm*{\nabla \f(\x)}^{2},\quad \forall\;\x \in \R^d. \tag{PL}
    \end{equation}
\end{assumption}
This assumption belongs to the class of structured non-convexity assumptions allowing linear convergence for first-order methods such as Gradient Descent \citet{necoara2019linear}.


The main result for \Algn is given below (see the proof in Appendix~\ref{appendix:proofs}).
\begin{theorem}\label{thm:main_result}
    Let Assumptions~\ref{as:bounded-var} and~\ref{as:lipschitzness} hold. Then after $T$ iterations, \Algn with $\gamma \leq \frac{1}{2L}$ outputs $\x^{i}$, $i=0,\cdots, T-1$ such that
    \begin{equation*}\textstyle
        \frac{1}{T}\sum_{t=0}^{T-1}\E \norm*{\nabla\f\rbr*{\x^{t}}}^{2}
        \le \frac{2\rbr*{\f\rbr*{\x^{0}} - \f\rbr*{\x^{*}}}}{T\gamma} + \frac{2\sigma^2 \gamma L}{\gn} + \frac{2 \delta}{\gamma},
    \end{equation*}
    where $\delta$ is the accuracy of solving the problem in Line~\ref{lst:line:aux_problem} and $G = |\cG|$. 
    Moreover if Assumption~\ref{as:pl} additionally holds, then after $T$ iterations of \Algn with $\gamma \leq \frac{1}{2L}$ outputs $\x^{T}$ such that
    \begin{equation*}\textstyle
        \E\f\rbr*{\x^{T}} - \f^* \le \rbr*{1 - \gamma\mu}^T \rbr*{\f\rbr*{\x^{0}} - \f^*}  + \frac{\sigma^2 \gamma L}{\mu\gn} + \frac{\delta}{\gamma\mu}.
    \end{equation*}
\end{theorem}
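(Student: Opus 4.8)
The plan is to exploit the structural fact that the adaptively chosen weights $\w^{t+1}$ make $\f(\x^{t+1})$ no larger---up to the optimization error $\delta$---than the value produced by \emph{any} fixed feasible weight vector. In particular, consider the \emph{deterministic} vector $\widehat{\w}\in\Delta_1^n$ with $\widehat{\w}_i=\nicefrac{1}{G}$ for $i\in\cG$ and $\widehat{\w}_i=0$ otherwise, which encodes exactly the idealized update~\eqref{eq:SGD_ideal}. Since $\widehat{\w}$ is feasible and $\w^{t+1}$ solves the problem in Line~\ref{lst:line:aux_problem} up to accuracy $\delta$, the first step is to establish
\[
\E\left[\f(\x^{t+1})\mid\x^t\right]\le \E\left[\f\rbr*{\x^t-\gamma\bar g^t}\mid\x^t\right]+\delta,\qquad \bar g^t\eqdef\tfrac{1}{G}\sum_{i\in\cG}\g_i(\x^t,\xiv_i).
\]
This is the crucial step: it converts the analysis of the biased, data-dependent aggregation direction $\sum_i\w_i^{t+1}\g_i$ (whose conditional mean is \emph{not} $\nabla\f(\x^t)$) into the analysis of a plain mini-batch \algname{SGD} step over the group $\cG$, for which $\bar g^t$ is an unbiased estimator of $\nabla\f(\x^t)$.

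Second, I would apply the descent inequality implied by $L$-smoothness (Assumption~\ref{as:lipschitzness}) to the surrogate step $\x^t-\gamma\bar g^t$. Taking the conditional expectation and using $\E\left[\bar g^t\mid\x^t\right]=\nabla\f(\x^t)$ together with the variance bound $\E\left[\norm*{\bar g^t-\nabla\f(\x^t)}^2\mid\x^t\right]\le\nicefrac{\sigma^2}{G}$---which follows from Assumption~\ref{as:bounded-var}, the identity $\f_i\equiv\f$ for $i\in\cG$ (same data distribution as the target), and independence of the samples across $\cG$---yields the one-step bound
\[
\E\left[\f(\x^{t+1})\mid\x^t\right]\le \f(\x^t)-\gamma\rbr*{1-\tfrac{L\gamma}{2}}\norm*{\nabla\f(\x^t)}^2+\tfrac{L\gamma^2\sigma^2}{2G}+\delta.
\]
With $\gamma\le\nicefrac{1}{2L}$ one has $1-\nicefrac{L\gamma}{2}\ge\nicefrac12$, so taking total expectation, telescoping over $t=0,\dots,T-1$, using $\f(\x^T)\ge\f^*$, and dividing by $\nicefrac{\gamma T}{2}$ gives the non-convex bound (the stated constants then follow by routine simplification).

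For the P\L{} part, I would start from the same one-step bound and substitute the P\L{} inequality $\norm*{\nabla\f(\x^t)}^2\ge 2\mu\rbr*{\f(\x^t)-\f^*}$ (Assumption~\ref{as:pl}) to obtain the contraction $\E\left[\f(\x^{t+1})-\f^*\mid\x^t\right]\le(1-\gamma\mu)\rbr*{\f(\x^t)-\f^*}+\tfrac{L\gamma^2\sigma^2}{2G}+\delta$; unrolling this recursion over $T$ steps and bounding the geometric series $\sum_{k=0}^{T-1}(1-\gamma\mu)^k\le\nicefrac{1}{\gamma\mu}$ yields the second claim. The main obstacle---and the reason the textbook \algname{SGD} argument cannot be applied directly---is precisely that the aggregation direction is biased, since $\w^{t+1}$ is chosen as a function of the realized stochastic gradients; comparing against the fixed deterministic point $\widehat{\w}$ is what circumvents this. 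Some care is also needed in interpreting the accuracy $\delta$ as a bound on $\E[\varphi_t(\w^{t+1})]-\min_{\w\in\Delta_1^n}\varphi_t(\w)$, so that it may legitimately be compared against the deterministic feasible $\widehat{\w}$ rather than against the random exact minimizer.
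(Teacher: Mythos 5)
Your proposal is correct and follows essentially the same route as the paper: compare the $\delta$-approximate minimizer of the auxiliary problem against the fixed feasible weights $\widehat{\w}$ uniform on $\cG$, then analyze the surrogate step $\x^t-\gamma\bar g^t$ as an ordinary mini-batch \algname{SGD} step via $L$-smoothness, unbiasedness, and the $\nicefrac{\sigma^2}{G}$ variance bound, before telescoping (non-convex case) or unrolling the contraction (P{\L} case). The only cosmetic difference is that you handle the quadratic term via the exact bias--variance decomposition of $\E\norm*{\bar g^t}^2$, whereas the paper splits it with Young's inequality, giving marginally different intermediate constants but the same final bounds.
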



If $\delta$ is sufficiently small, then the above result matches the known results for Parallel \algname{SGD} \citet{ghadimi2013stochastic, karimi2016linear, khaled2022better} that uniformly averages only the workers from the group $\cG$, i.e., those workers that have data distribution $\cD_1$ (see the method in \eqref{eq:SGD_ideal}). More precisely, we see a linear speed-up of $\nicefrac{1}{G}$ in the obtained convergence rates. However, \Algn does not require the knowledge of the workers that have the same distribution. Moreover, as we see in our numerical experiments, \Algn can converge even better when there exist workers having different but close data distribution, and it is not necessary to solve the auxiliary problem in Line~\ref{lst:line:aux_problem} with high precision.


\section{Experiments} \label{sec:numerical_results}


Since the literature on FL is very rich, we focus only on the closely related methods, i.e., the methods that satisfy two criteria: (i) they solve the same problem as we consider in our work~\ref{eq:main_problem}, and (ii) have theoretical convergence guarantees. That is, we evaluate the performance of proposed methods in comparison with \algname{FedAdp}~\citet{wu2021fast}, \algname{TAWT}~\citet{chen2021weighted}, 
and \algname{FedProx}~\citet{li2020federatedopt} (\algname{FedProx} reduces to \algname{FedAvg} if there are no local steps, that is the setup for \algname{MeritFed}). We also compare standard SGD with uniform weights (labeled as \algname{SGD Full}\footnote{Although, \algname{FedProx} and \algname{SGD Full} are designed for standard empirical risk minimization, we consider these methods as standard baselines.}), \algname{SGD} that accumulates only gradients from clients with the target distribution (\algname{SGD Ideal}) and two versions of our algorithm. The first one, labeled as \algname{MeritFed SMD}, samples gradient for the Mirror Descent subroutine in contrast to the other one, labeled as \algname{MeritFed MD}, that uses the full dataset (additional or train) to calculate gradient for Mirror Descent step. We use only $10$ Mirror Descent steps for solving the auxiliary problem from Line~\ref{lst:line:aux_problem} since it was sufficient to achieve good enough results in our experiments. In addition, we present the evolution of weighs (if applicable) using heat-map plots. In the main text, we show the results for the case when the additional validation dataset is available for the problem in Line~\ref{lst:line:aux_problem}. Additional experiments with the usage of train data for the problem in Line~\ref{lst:line:aux_problem}, with the presence of Byzantine participants and with more workers, are provided in the appendix. Our code is available at \url{https://anonymous.4open.science/r/86315}. We use internal cluster with the following hardware: AMD EPYC 7552 48-Core Processor, 512GiB RAM, NVIDIA A100 40GB GPU, 200Gb user storage space.

\begin{figure}[t]
    \begin{minipage}[htp]{0.325\textwidth}
        \centering
        \includegraphics[width=1\linewidth]{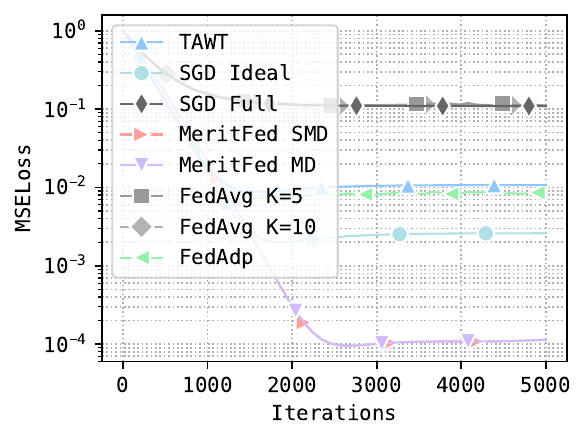}
        \vspace{-0.8cm}
        \caption{Mean Estimation: $\mu = 0.001$, MD learning rate = 3.5.}
        \vspace{-0.3cm}
        \label{fig:emean1}
    \end{minipage}
\hfill
    \begin{minipage}[htp]{0.325\textwidth}
        \centering
        \includegraphics[width=1\linewidth]{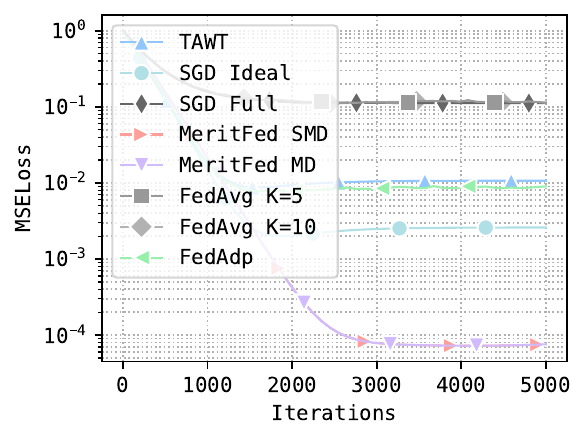}
        \vspace{-0.8cm}
        \caption{Mean Estimation: $\mu = 0.01$, MD learning rate = 4.5.}
        \vspace{-0.3cm}
        \label{fig:emean2}
    \end{minipage}
\hfill
    \begin{minipage}[htp]{0.325\textwidth}
        \centering
        \includegraphics[width=1\linewidth]{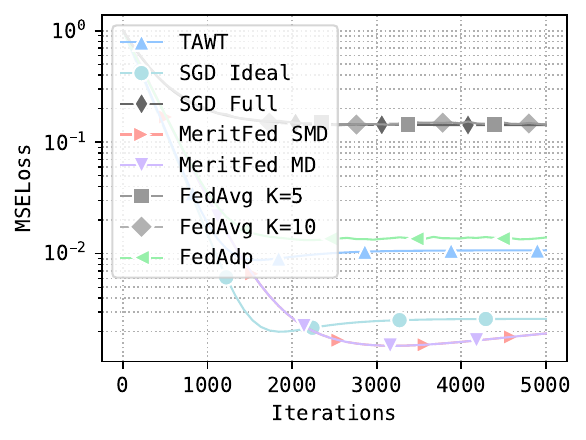}
        \vspace{-0.8cm}
        \caption{Mean Estimation: $\mu = 0.1$, MD learning rate = 12.5.}
        \vspace{-0.3cm}
        \label{fig:emean3}
    \end{minipage}
\end{figure}




{\bf Mean estimation.} We start with the mean estimation problem, i.e., finding such a vector that minimizes the mean squared distance to the data samples. More formally, the goal is to solve
\begin{equation}\textstyle
    \min_{x\in \R^d} \E_{\xi \sim \cD_1}\|x - \xi\|^2, \notag
\end{equation}
that has the optimum at $x^* = \E_{\xi \sim \cD_1}[\xi]$. We consider $\cD_1 = \cN(0, \mI)$ and also two other distributions from where some clients also get samples: $\cD_2 = \cN(\mu\mathbf{1}, \mI)$ and $\cD_3 = \cN(e, \mI)$, where $\mathbf{1} = (1,1,\ldots,1)^\top \in \R^d$, $\mu > 0$ is a parameter, and $e$ is some vector that we obtain in advance via sampling uniformly at random from the unit Euclidean sphere. We consider $150$ clients with data distributed as follows: the first $5$ workers have data from $\cD_1$ (the first group of clients), the next $95$ workers have data from $\cD_2$ (the second group of clients), and the remaining $50$ clients have data from $\cD_3$ (the third group of clients). Each client has $1000$ samples from the corresponding distribution, and the target client has additional $1000$ samples for validation, i.e., for solving the problem in Line~\ref{lst:line:aux_problem}. The dimension of the problem is $d = 10$.  Parameters that are the same for all experiments: number of peers $= 150$, number of samples $= 1000$, batch size $= 100$, learning rate $= 0.01$, number of steps for Mirror Descent $= 50$. For \algname{FedAvg}, the number of sampled clients $K$ is chosen from the set $\{5, 10\}$.




We consider three cases: $\mu = 0.001$, $0.01$, $0.1$. The smaller $\mu$ is, the closer $\cD_2$ is to $\cD_1$ and, thus, the more beneficial the samples from the second group are. Therefore, for small $\mu$, we expect to see that \Algn outperforms \algname{SGD Ideal}. Moreover, since the workers from the third group have quite different data distribution, \algname{SGD Full} is expected to work worse than other baselines.

The results are presented in Figures~\ref{fig:emean1}-\ref{fig:emean3}. They fit the described intuition and our theory well:
the workers from the second group are beneficial 
(since their distributions are close enough to the distribution of the target client). Indeed, \Algn achieves better optimization error (due to the smaller variance because of the averaging with more workers). 
However, when 
the dissimilarity between distributions is large
the second group becomes less useful for the training, and \Algn has comparable performance to \algname{SGD Ideal} and consistently outperforms other methods.



\begin{figure*}[t]
    \begin{minipage}[htp]{0.24\textwidth}
        \centering
        \includegraphics[width=1\linewidth]{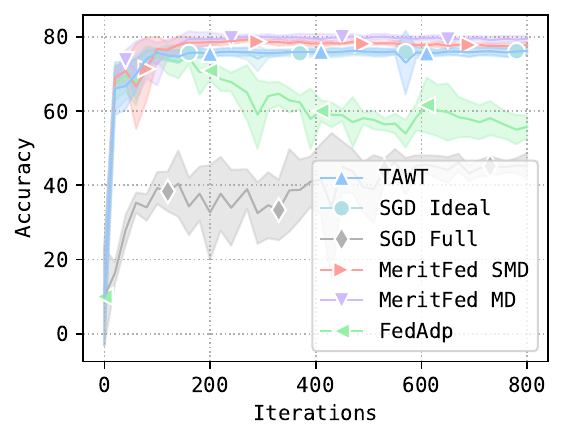}
        \includegraphics[width=1\linewidth]{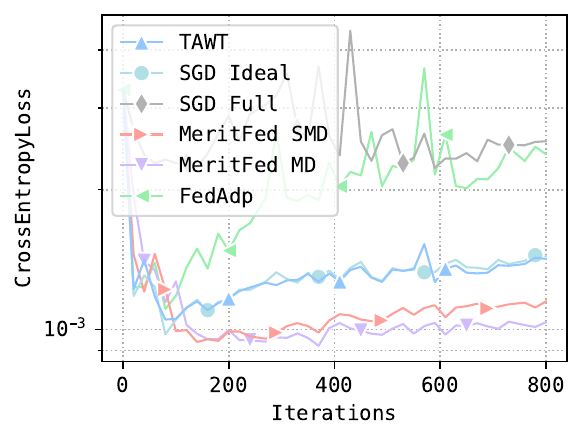}
        \includegraphics[width=1\linewidth]{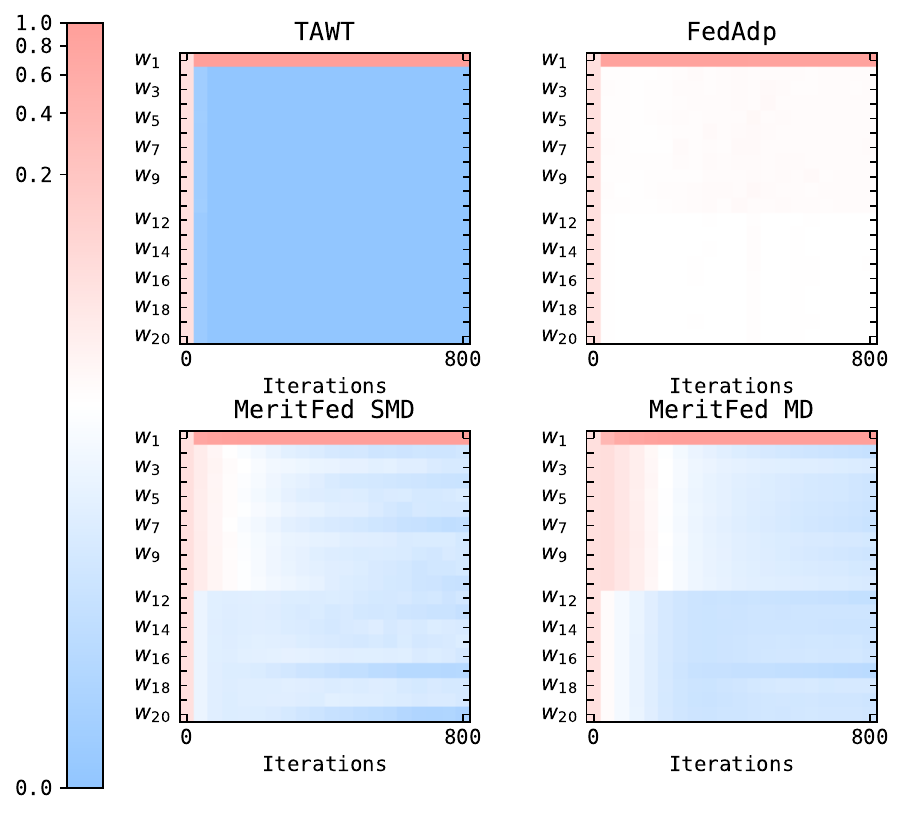}
        \vspace{-0.6cm}
        \caption{CIFAR10 (extra val.): $\alpha = 0.5$}
        \vspace{-0.2cm}
        \label{fig:cifar-val-0.5}
    \end{minipage}
\hfill
    \begin{minipage}[htp]{0.24\textwidth}
        \centering
        \includegraphics[width=1\linewidth]{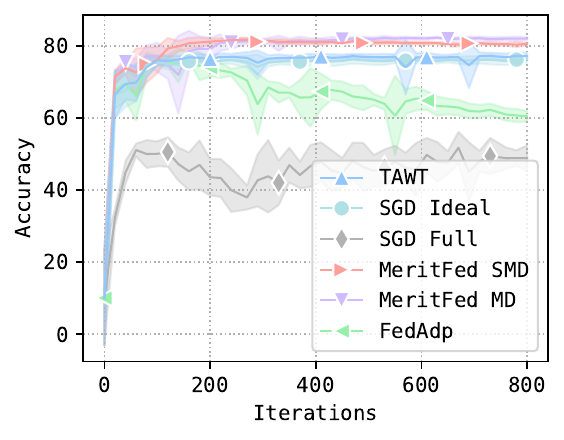}
        \includegraphics[width=1\linewidth]{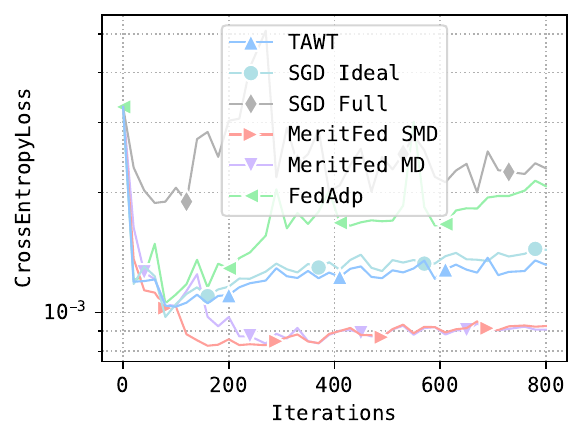}
        \includegraphics[width=1\linewidth]{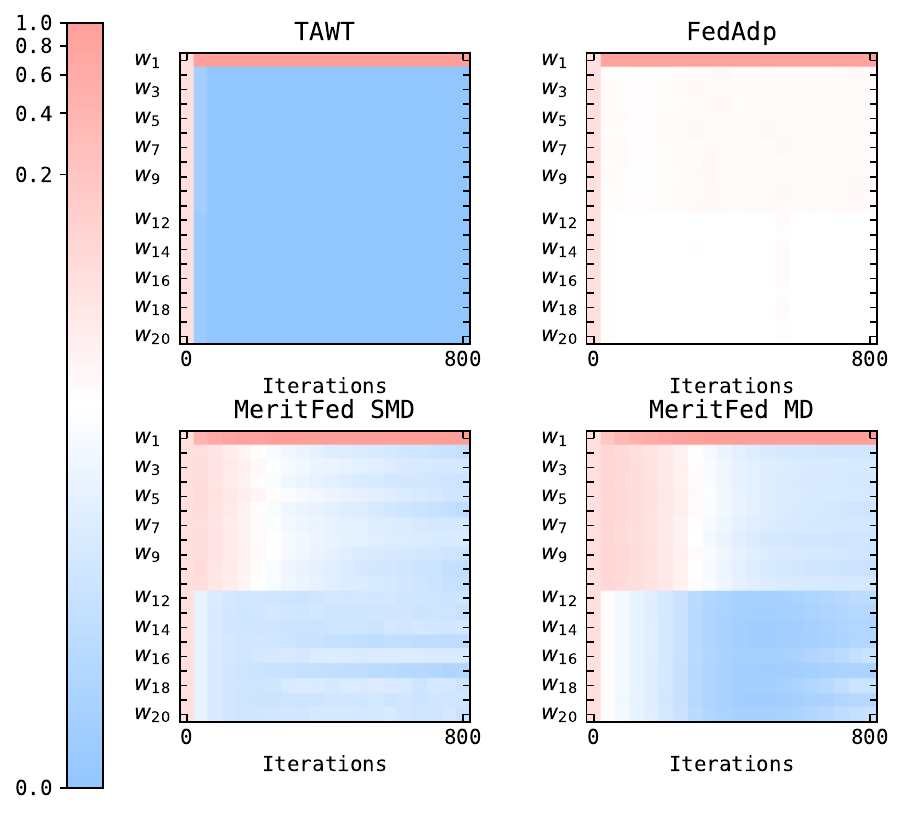}
        \vspace{-0.6cm}
        \caption{CIFAR10 (extra val.): $\alpha = 0.7$}
        \vspace{-0.2cm}
        \label{fig:cifar-val-0.7}
    \end{minipage}
\hfill
    \begin{minipage}[htp]{0.24\textwidth}
        \centering
        \includegraphics[width=1\linewidth]{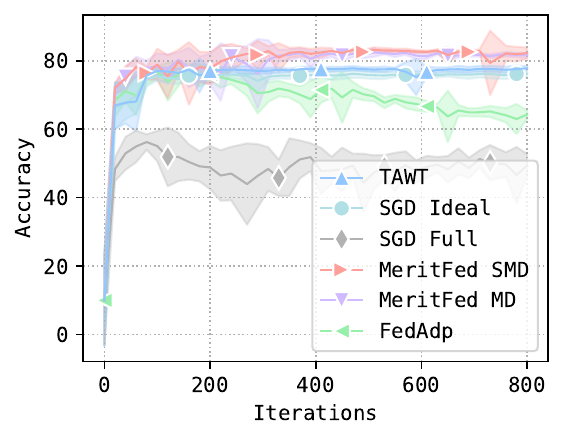}
        \includegraphics[width=1\linewidth]{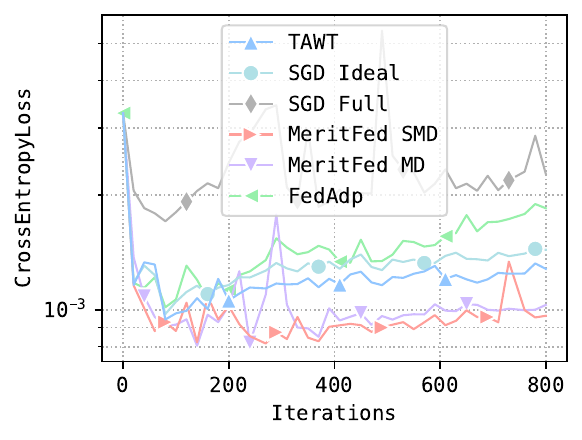}
        \includegraphics[width=1\linewidth]{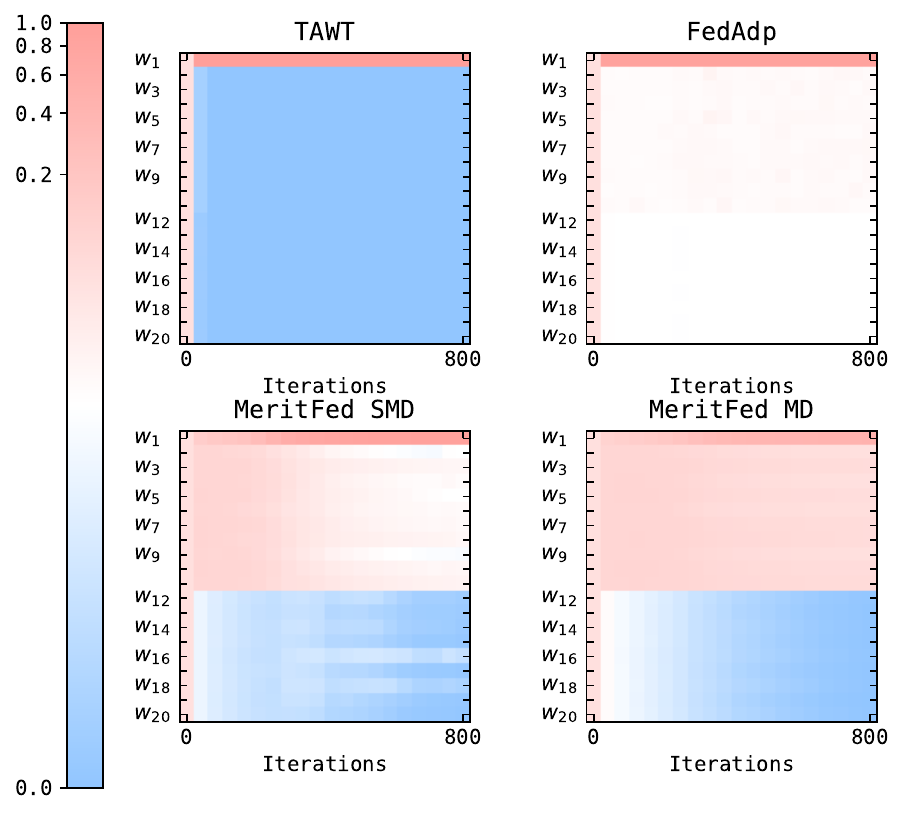}
        \vspace{-0.6cm}
        \caption{CIFAR10 (extra val.): $\alpha = 0.9$}
        \vspace{-0.2cm}
        \label{fig:cifar-val-0.9}
    \end{minipage}
\hfill
    \begin{minipage}[htp]{0.24\textwidth}
        \centering
        \includegraphics[width=1\linewidth]{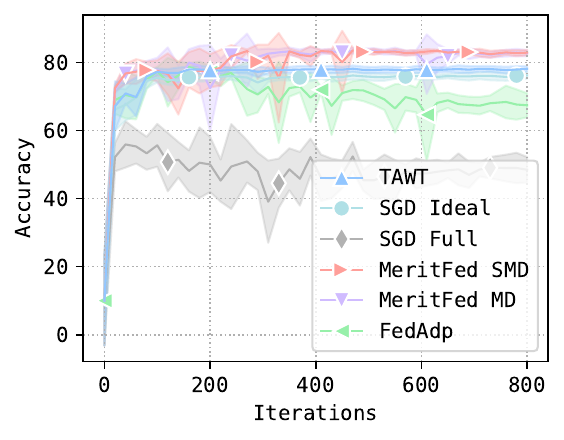}
        \includegraphics[width=1\linewidth]{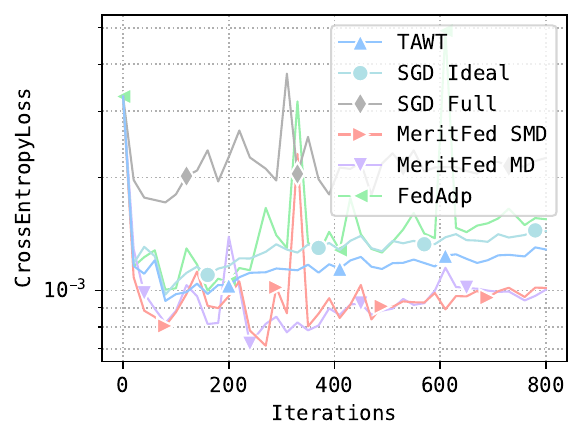}
        \includegraphics[width=1\linewidth]{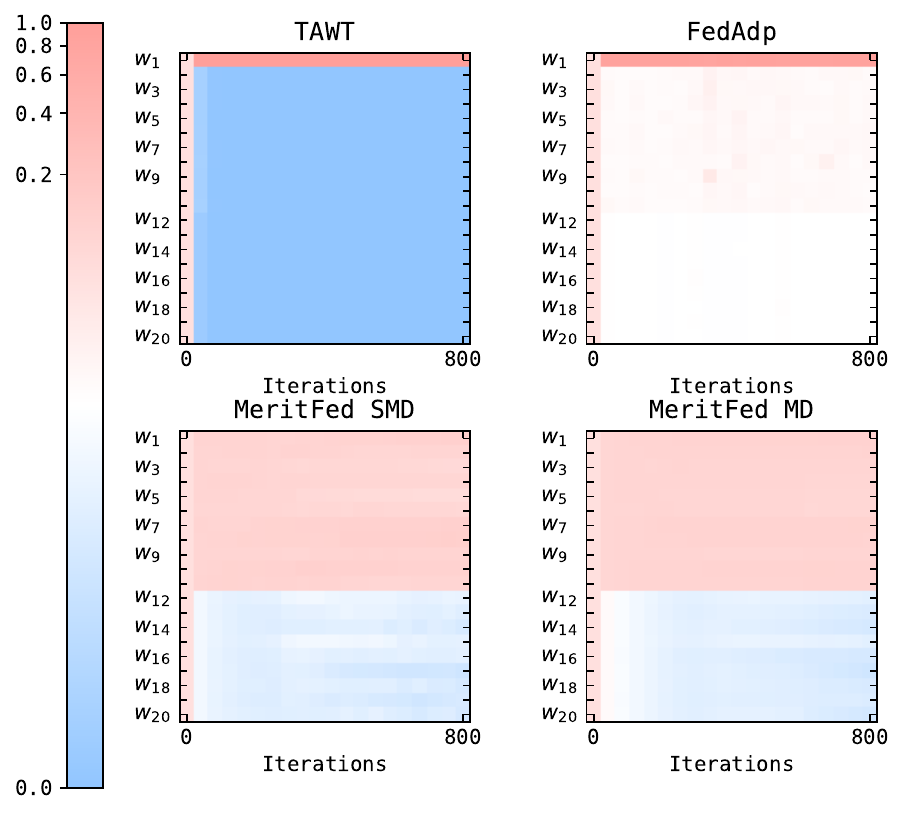}
        \vspace{-0.6cm}
        \caption{CIFAR10 (extra val.): $\alpha = 0.99$.}
        \vspace{-0.2cm}
        \label{fig:cifar-val-0.99}
    \end{minipage}
\end{figure*}

\begin{figure*}[t]
    \begin{minipage}[htp]{0.24\textwidth}
        \centering
        \includegraphics[width=1\linewidth]{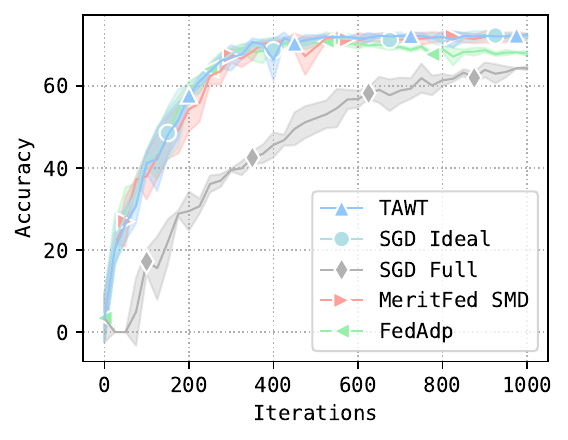}
        \includegraphics[width=1\linewidth]{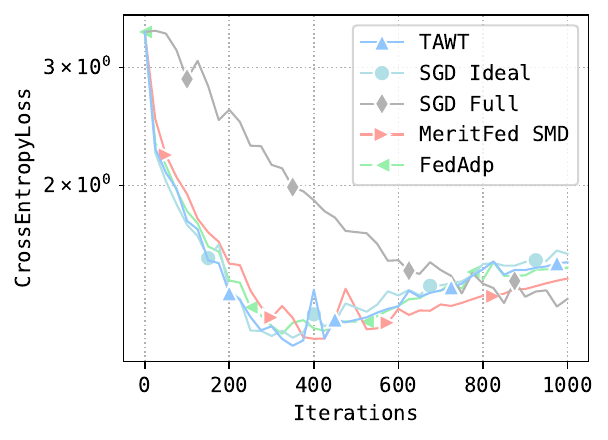}
        \includegraphics[width=1\linewidth]{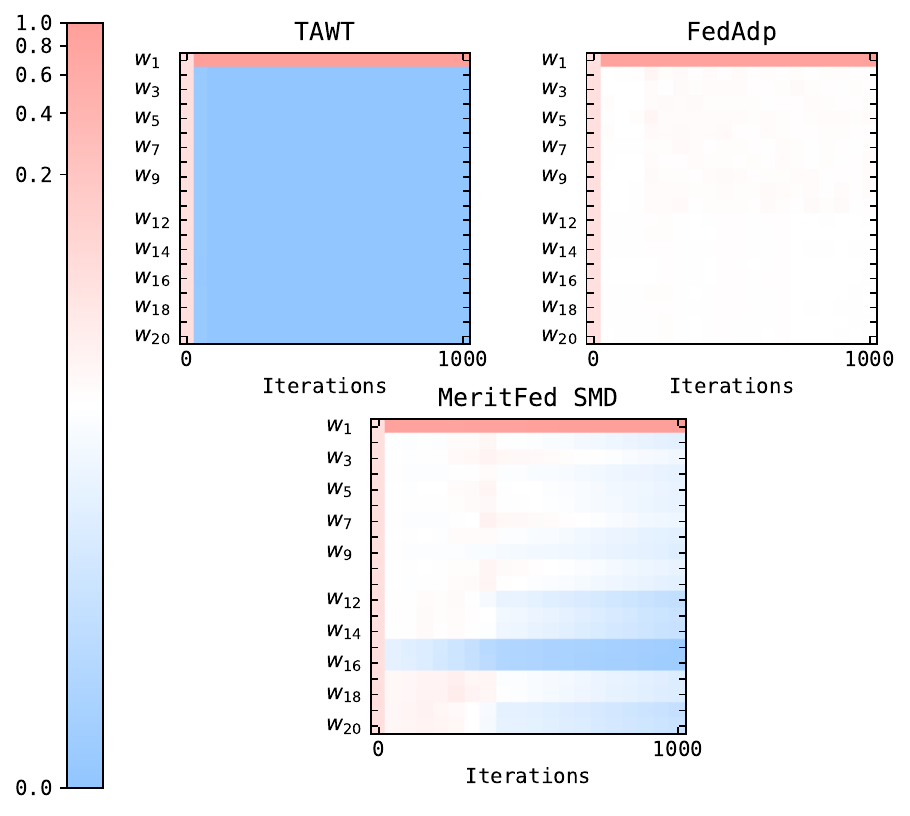}
        \vspace{-0.6cm}
        \caption{GoEmotions (extra val.): $\alpha = 0.5$}
        \vspace{-0.3cm}
        \label{wfig:bert-val-0.5}
    \end{minipage}
\hfill
    \begin{minipage}[htp]{0.24\textwidth}
        \centering
        \includegraphics[width=1\linewidth]{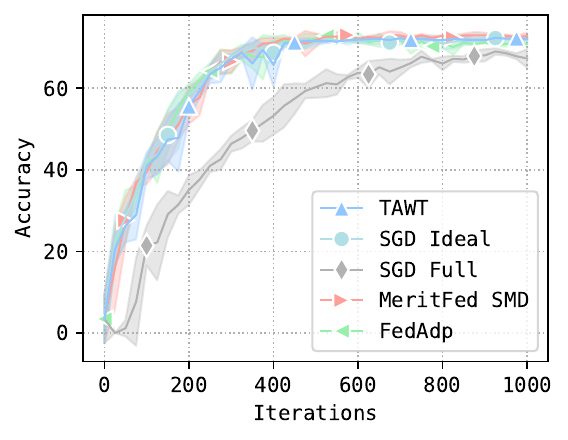}
        \includegraphics[width=1\linewidth]{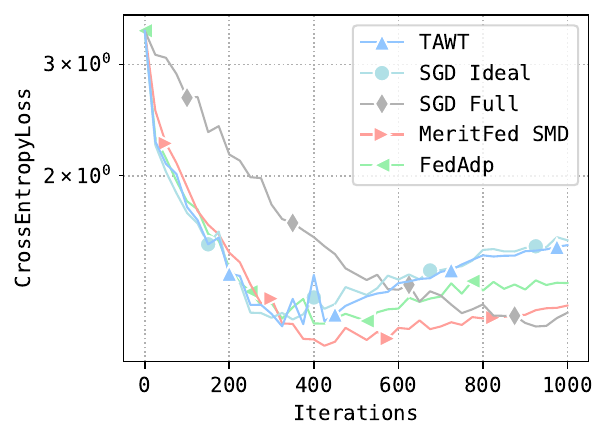}
        \includegraphics[width=1\linewidth]{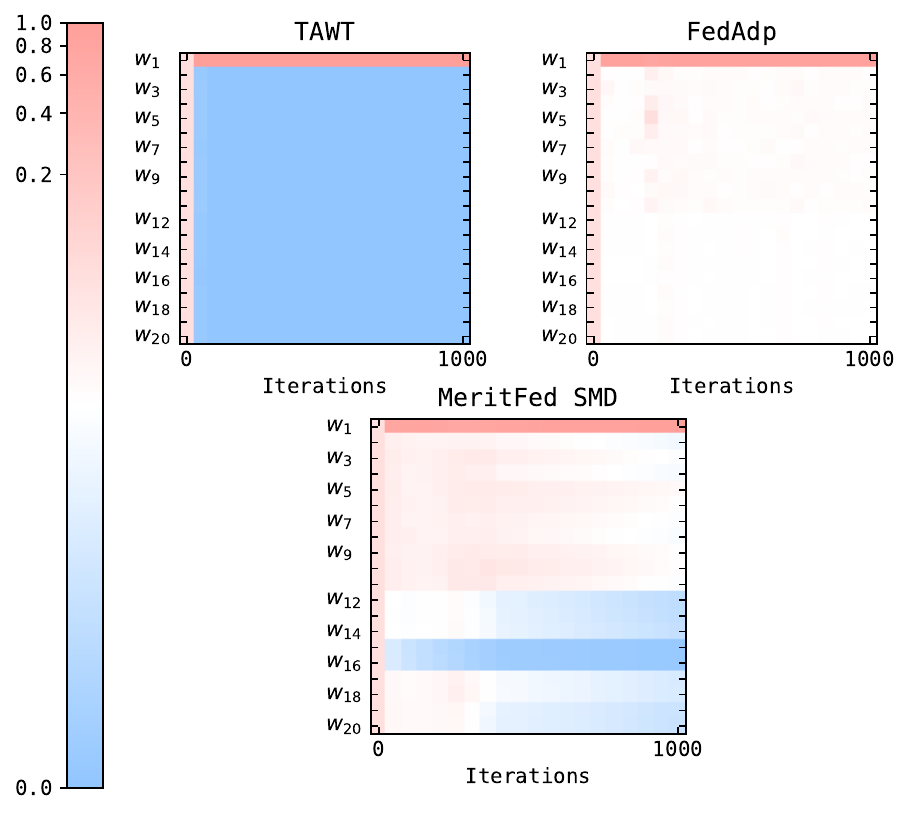}
        \vspace{-0.6cm}
        \caption{GoEmotions (extra val.): $\alpha = 0.7$}
        \vspace{-0.3cm}
        \label{wfig:bert-val-0.7}
    \end{minipage}
\hfill
    \begin{minipage}[htp]{0.24\textwidth}
        \centering
        \includegraphics[width=1\linewidth]{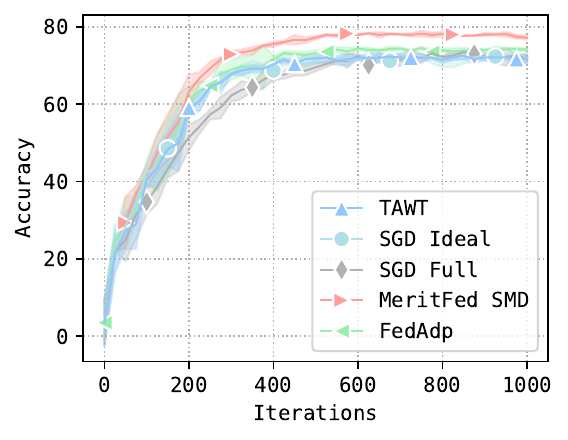}
        \includegraphics[width=1\linewidth]{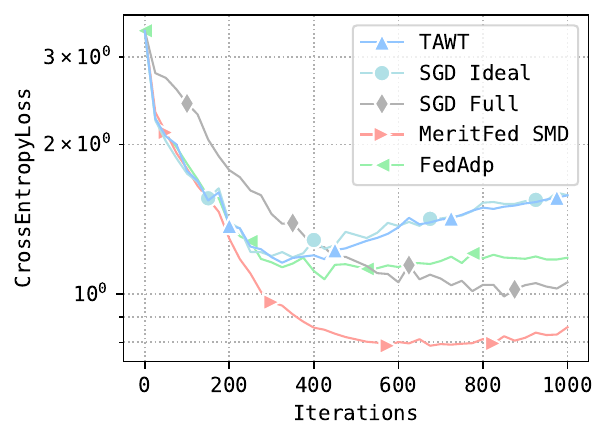}
        \includegraphics[width=1\linewidth]{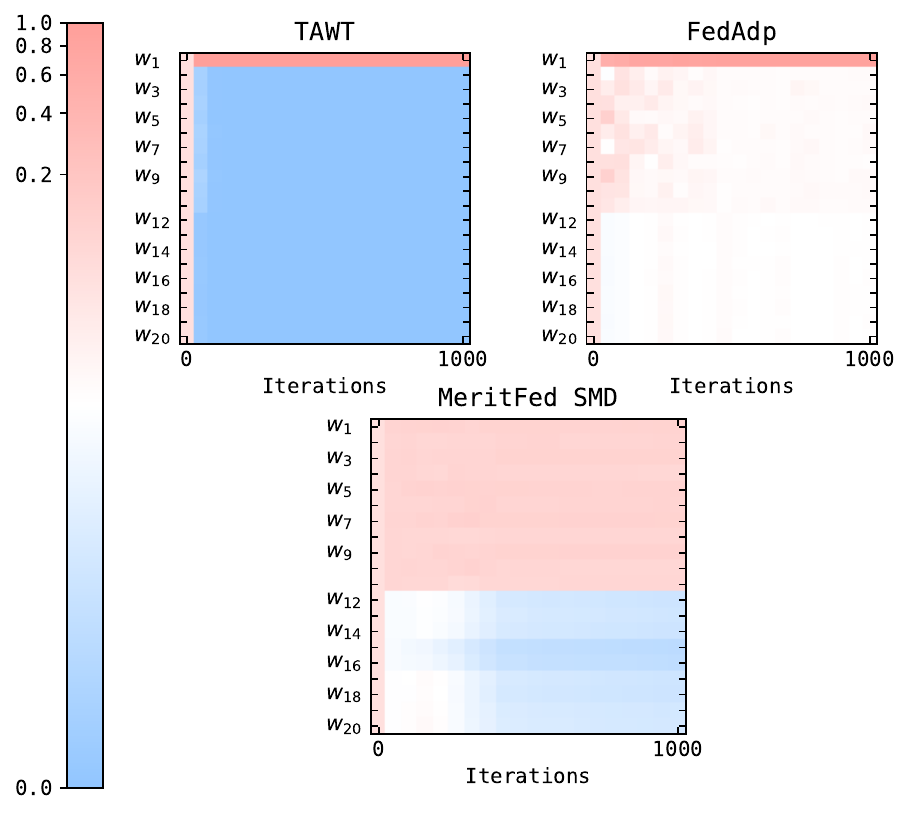}
        \vspace{-0.6cm}
        \caption{GoEmotions (extra val.): $\alpha = 0.9$}
        \vspace{-0.3cm}
        \label{wfig:bert-val-0.9}
    \end{minipage}
\hfill
    \begin{minipage}[htp]{0.24\textwidth}
        \centering
        \includegraphics[width=1\linewidth]{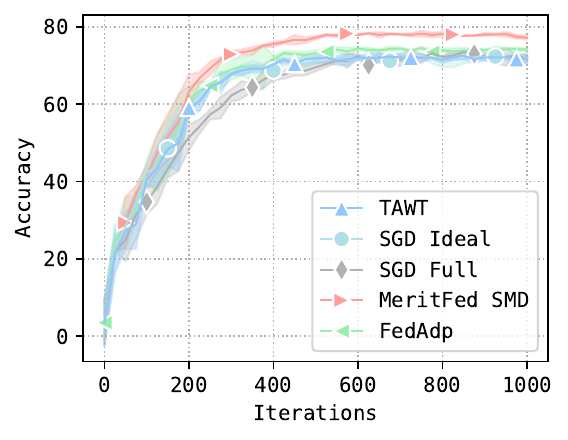}
        \includegraphics[width=1\linewidth]{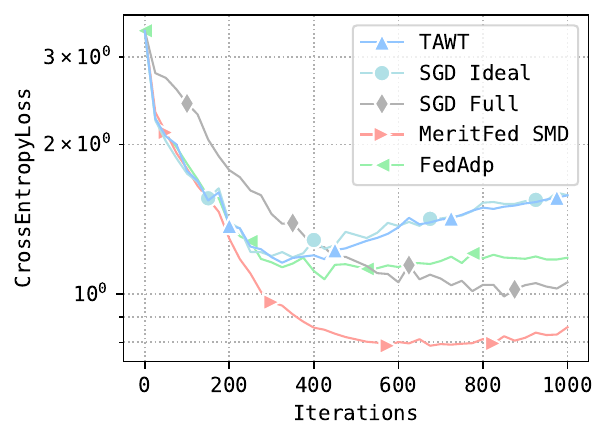}
        \includegraphics[width=1\linewidth]{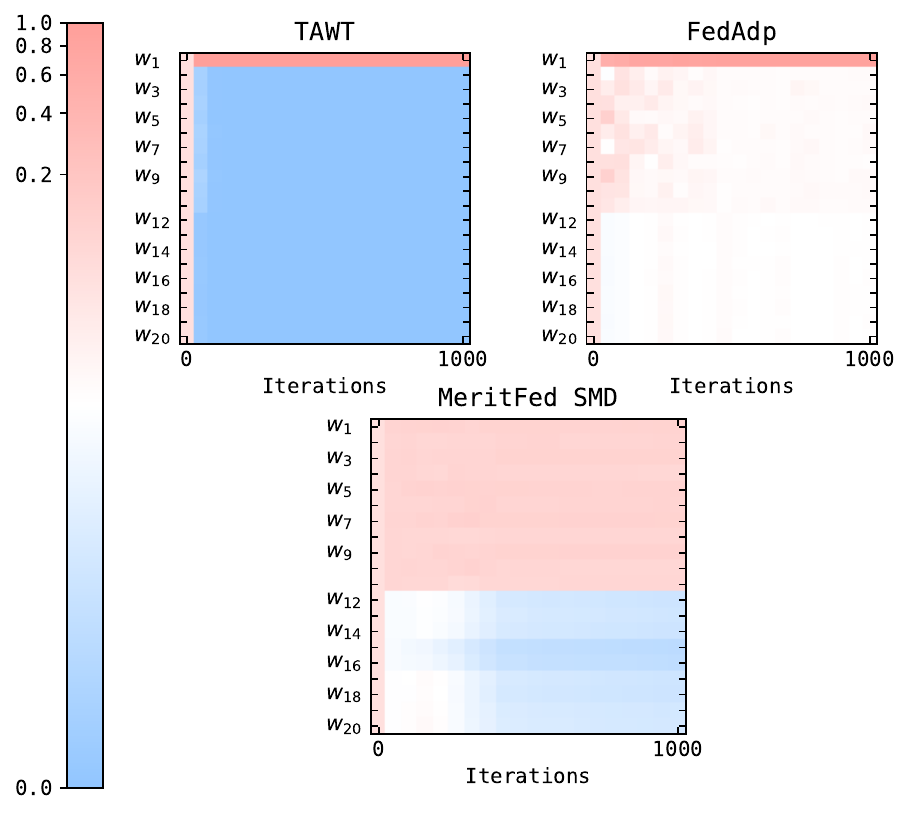}
        \vspace{-0.6cm}
        \caption{GoEmotions (extra val.): $\alpha = 0.99$}
        \vspace{-0.2cm}
        \label{wfig:bert-val-0.99}
    \end{minipage}
\end{figure*}

{\bf Image classification: CIFAR10 + ResNet18.} This part is devoted to image classification on the CIFAR10~\citet{krizhevsky2009learning} dataset using ResNet18~\citet{he2016deep} model and cross-entropy loss. We consider $20$ clients with data distributed as follows: the first worker has data from $\cD_1$ (the first group of clients), the next $10$ workers have data from $\cD_2$ (the second group of clients), and the remaining $9$ clients have data from $\cD_3$ (the third group of clients). Specifically, the target client's objective is to classify the first three classes: 0, 1, and 2. This client possesses data with these three labels. The following ten workers (second group) also have datasets where a proportion, denoted by $\alpha \in (0,1]$, consists of classes from the set ${0,1,2}$, while the remaining $1-\alpha$ portion includes classes from the set ${3,4,5}$.
The remaining clients (third group) have data from the rest, e.g., ${6,7,8,9}$ labeled. The data is randomly distributed among clients without overlaps, adhering to the aforementioned label restrictions.
For \algname{MeritFed} each worker calculates stochastic gradient using a batch size of $75$; then the server uses Mirror Descent (or its stochastic version) with a batch-size of $90$ (in case of stochastic version) and a learning rate of $0.1$ to update weights of aggregation, and then performs a model parameters update with a learning rate of $0.01$.
We normalize images (similarly to~\citet{horvath2020better}).
Since an additional validation dataset can be used by  \Algn, we cut 300 samples of each target class (${0, 1, 2}$) off from the test data. Accuracy and loss are calculated on the rest of the test data, including labels $0$, $1$, and $2$, modeling the case when the target client aims to classify samples with these labels.

The results are provided in Figures~\ref{fig:cifar-val-0.5}-\ref{fig:cifar-val-0.99}, where we show how accuracy and cross-entropy loss change for different methods and different values of $\alpha$, which measures the similarity between data distributions of the target client and the second group of clients, and the evolution of the aggregation weights. In all settings, \algname{MeritFed} outperforms \algname{SGD Ideal} and other baselines regardless of $\alpha$. In all cases, the weights are almost the same for all workers during the few initial steps (even if workers have quite different distributions like for the last nine clients). This phenomenon can be explained as follows: if we have two different convex functions with different optima (e.g., two quadratic functions), then for a far enough starting point, the gradients of those functions will point roughly in the same direction. Therefore, during a few initial steps, both gradients are useful and the method gives noticeable weights to both. However, once the method comes closer to the optima, the gradients become noticeably different, and after a certain stage, the gradient of the second function no longer points closely towards the optimum of the first function. Therefore, starting from this stage, \algname{MeritFed} assigns a smaller weight to the gradient of the second function. Going back to~Figures~\ref{fig:cifar-val-0.5}-\ref{fig:cifar-val-0.99}, we see a similar behavior: for $\alpha = 0.5$, the advantages of collaboration with clients $2$-$11$ disappear after a certain stage since the method reaches the region where two distributions become noticeably different. In contrast, when $\alpha = 0.99$, those workers have a very close distribution to the target worker, and therefore, their stochastic gradients remain useful during the whole learning process. \algname{FedAdp} is biased to the target client and assigns almost identical weights to either clients with similar or dissimilar distributions, which results in an accuracy decrease at the end of the training, in contrast to \Algn, which tracks and maintains less weights to non-beneficial clients. \algname{TAWT} is much more biased to the target client, which makes it almost identical to \algname{SGD Ideal}.



{\bf Texts classification: GoEmotions + BERT.} The next problem we consider is devoted to fine-tuning pretrained BERT~\citet{devlin2018bert}  model for emotions classification on the GoEmotions dataset~\citet{demszky2020goemotions}. 
The dataset consist of texts labeled with one or more of 28 emotions.
First of all, we form "truncated dataset" by cutting the dataset so that its each entry has the only label.

Then we use Ekman mapping~\citet{ekman1992argument} to split the data between clients. According to the mapping, 28 emotions can be mapped to 7 basic emotions. That is, we simulate a situation when the target client classifies only basic emotions, e.g., the target client has only emotions belonging to "joy" class and namely includes only "joy", "amusement", "approval", "excitement", "gratitude",  "love", "optimism", "relief", "pride", "admiration", "desire", "caring". The distribution of these sub-emotions is kept to be the same as the distribution of the truncated train dataset. Clients, that data are suppose to have similar distribution (second group -- next 10 clients), also has texts from base class "joy" and are labeled as one of the sub-emotion belonging to "joy".  The distribution of sub-emotions is also the same as the distribution of the truncated train dataset. These texts constitute an $\alpha$ portion of the total client's data. The other $1-\alpha$ portion of the texts is taken from "neutral" class.

The rest of clients (third group -- next 9 clients) are supposed to have different distribution and their data consist of either texts belonging to one of the other basic emotion, either mixed with neutral (if there is not enough texts to have a desired number of samples) or texts from "neutral" class only. Again, the distribution of sub-emotions is the same as the distribution of the truncated train dataset.
For \algname{MeritFed} each worker calculates stochastic gradient using a batch size of $40$; then the server uses Mirror Descent (or its stochastic version) with a batch-size of $30$ (in case of stochastic version) and a learning rate of $0.1$ to update weights of aggregation, and then performs a model parameters update with a learning rate of $0.01$.
The plots are averaged over 3 runs with different seeds. Additionally, accuracy plots show standard deviation.
The results are presented in Figures~\ref{wfig:bert-val-0.5}-\ref{wfig:bert-val-0.99}. The target client benefits from collaborating with clients from the second group and achieves better accuracy using \Algn. In general, the results are similar to the ones obtained for image classification.






\section{Conclusion}\label{sec:conclusion}

In this paper, we introduced a novel algorithm called Merit-based Federated Learning (\Algn) to address the challenges posed by the heterogeneous data distributions in federated learning (FL). We demonstrated that \Algn can effectively harness the advantages of distinct data distributions, control the detrimental effects of outlier clients, and promote collaborative learning. Our approach assigns adaptive aggregation weights to clients participating in FL training, allowing for faster convergence and potentially better generalization. The key contributions of this paper include the development of \Algn, provable convergence under mild assumptions, and the ability to utilize benefits from collaborating with clients having different but similar data distributions.

However, our work also has certain limitations. Firstly, (in theory) \Algn relies on the availability of validation data on the target client to solve the auxiliary problem. In practice, collecting and maintaining such validation data might not always be feasible or efficient. Secondly, the experiments in this paper were conducted with a limited number of clients and a moderate dataset size. Extending \Algn to large-scale FL scenarios with a substantial number of clients and massive datasets may pose scalability challenges. Addressing these limitations is part of our plan for future work. 

Furthermore,  \Algn serves as a foundation for numerous extensions and enhancements. Future research can explore topics such as acceleration techniques, adaptive or scaled optimization methods (e.g., variants akin to \algname{Adam}) on the server side, communication compression strategies, and the efficient implementation of similar collaborative learning approaches for all clients simultaneously. These directions will contribute to the continued development of federated learning methods, making them more efficient, robust, and applicable to a wide range of practical scenarios.

\bibliography{refs}
\bibliographystyle{apalike}

\newpage
\appendix
\onecolumn

\tableofcontents

\newpage

\section{Weights Update for \algname{TAWT} and \algname{FedAdp}}

{\bf TAWT.} A faithful implementation of \algname{TAWT} \citep{chen2021weighted} would require a costly evaluation of the inverse of the Hessian matrix $\sum_{t=1}^T \w_t \nabla^2 \f({\x^k})$ to calculate an approximation of hyper-gradient $g^{k}$.
Then $g^{k}$ is supposed to be used to run one step of Mirror Descent (with step size $\eta^k$) to update the weights:
\begin{equation}
    \label{eq:mirror_descent}
    \w_t^{k+1} = \frac{\w_t^k \exp\{-\eta^k g^{k}_t\} }{\sum_{t'=1}^T \w_{t'}^{k}\exp\{-\eta^k g^{k}_{t'}\}}.
\end{equation}

In practice, \citet{chen2021weighted} advise bypassing this step by replacing the Hessian-inverse-weighted dissimilarity measure with a cosine-similarity-based measure, i.e., to approximate $g^k_t$ by $-c \times \cS(\nabla \f_0({\x^k}), \nabla\f_t({\x^k}))$, where 
 \begin{align*}
    \label{eq8}
    \cS(a,b) & = \arccos{ \frac{\langle a,b \rangle}{\Vert a\Vert  \Vert b\Vert} } \tag{8}
\end{align*}
denotes the cosine similarity between two vectors.

{\bf FedAdp.} \algname{FedAdp} \citep{wu2021fast} uses a similar update rule for weights, but it additionally uses a non-linear mapping function (\textit{Gompertz function}) 
\begin{align*}
\label{eq13}
    \cG(\xi) = \alpha \left(1 - e^{-e^{-\alpha\xi}}\right)
\end{align*}
where $\xi$ is the \textit{smoothed angle} in \textit{radian}, $e$ denotes the exponential constant and $\alpha$ is a constant. By denoting $\cS^k_t = \cS(\nabla \f_0({\x^k}), \nabla\f_t({\x^k}))$ one can obtain \algname{FedAdp} weights update rule in the form 
\begin{align*}
    \w^k_t = \frac{e^{\cG( \cS^k_t )}}{\sum_{t^\prime=1}^{n} e^{\cG(\cS^k_t)}}.
\end{align*}

\newpage

\section{Proof of Theorem~\ref{thm:main_result}}\label{appendix:proofs}

\begin{theorem}\label{thm:main_result_appendix}
    Let Assumptions~\ref{as:bounded-var} and~\ref{as:lipschitzness} hold. Then after $T$ iterations of \Algn with $\gamma \leq \frac{1}{2L}$ outputs $\x^{i}$, $i=0,\cdots, T-1$ such that
    \begin{equation}
        \frac{1}{T}\sum_{t=0}^{T-1}\E \norm*{\nabla\f\rbr*{\x^{t}}}^{2}
        \le \frac{2\rbr*{\f\rbr*{\x^{0}} - \f\rbr*{\x^{*}}}}{T\gamma} + \frac{2\sigma^2 \gamma L}{\gn} + \frac{2 \delta}{\gamma}, \label{eq:non_cvx_result_appendix}
    \end{equation}
    where $\delta$ is the accuracy of solving the problem in Line~\ref{lst:line:aux_problem} and $G = |\cG|$. 
    Moreover if Assumption~\ref{as:pl} additionally holds, then after $T$ iterations of \Algn with $\gamma \leq \frac{1}{2L}$ outputs $\x^{T}$ such that
    \begin{equation}
        \E\f\rbr*{\x^{T}} - \f^* \le \rbr*{1 - \gamma\mu}^T \rbr*{\f\rbr*{\x^{0}} - \f^*}  + \frac{\sigma^2 \gamma L}{\mu\gn} + \frac{\delta T}{\gamma\mu}. \label{eq:PL_result_appendix}
    \end{equation}
\end{theorem}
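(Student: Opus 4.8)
The plan is to reduce the analysis of \Algn entirely to that of the inaccessible ``ideal'' method in \eqref{eq:SGD_ideal} by exploiting the single structural fact that the feasible set in Line~\ref{lst:line:aux_problem} is the \emph{whole} simplex $\Delta_1^n$, which contains the weight vector $\bar{w}$ placing mass $\nicefrac{1}{G}$ on each worker in $\cG$ and $0$ on all others ($G \eqdef \abs{\cG}$). Writing $\widehat{g}^{\,t} \eqdef \frac{1}{G}\sum_{i\in\cG}\g_i(\x^t,\xiv_i)$ for the ideal direction, the $\delta$-accuracy of $\w^{t+1}$ gives, for every realization of the gradients,
\[
\f(\x^{t+1}) \le \min_{w\in\Delta_1^n}\f\rbr*{\x^t - \gamma\textstyle\sum_{i=1}^n w_i\,\g_i(\x^t,\xiv_i)} + \delta \le \f\rbr*{\x^t - \gamma\widehat{g}^{\,t}} + \delta.
\]
Hence it suffices to run a standard stochastic descent argument on the surrogate point $\x^t - \gamma\widehat{g}^{\,t}$ and pay one additive $\delta$ per iteration; the algorithm never needs to identify $\cG$.

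First I would establish the two statistical properties of $\widehat{g}^{\,t}$. Since every $i\in\cG$ shares the target distribution $\cD_1$, we have $\nabla\f_i=\nabla\f$ on $\cG$, so $\E_t[\widehat{g}^{\,t}]=\nabla\f(\x^t)$ (with $\E_t$ conditioning on $\x^t$); and by independence of the workers together with Assumption~\ref{as:bounded-var}, $\E_t\norm*{\widehat{g}^{\,t}-\nabla\f(\x^t)}^2 \le \nicefrac{\sigma^2}{G}$. This averaging over $\cG$ is exactly where the $\nicefrac{1}{G}$ linear speed-up originates, and confirming it is genuinely $\nicefrac{1}{G}$ rather than $\nicefrac{1}{n}$ is the one place requiring care.

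Next I would apply the $L$-smoothness descent lemma (Assumption~\ref{as:lipschitzness}) to $\x^t-\gamma\widehat{g}^{\,t}$, take $\E_t$, and substitute the bias--variance split $\E_t\norm*{\widehat{g}^{\,t}}^2 = \norm*{\nabla\f(\x^t)}^2 + \E_t\norm*{\widehat{g}^{\,t}-\nabla\f(\x^t)}^2$. Choosing $\gamma\le\nicefrac{1}{2L}$ keeps the coefficient of $\norm*{\nabla\f(\x^t)}^2$ at least $\nicefrac{\gamma}{2}$, and folding in the $+\delta$ from the reduction yields the one-step inequality
\[
\E_t\sbr*{\f(\x^{t+1})} \le \f(\x^t) - \frac{\gamma}{2}\norm*{\nabla\f(\x^t)}^2 + \frac{\sigma^2\gamma^2 L}{G} + \delta.
\]
For the first claim I take total expectation, telescope over $t=0,\dots,T-1$, and divide by $\nicefrac{\gamma T}{2}$, which produces the three terms of \eqref{eq:non_cvx_result_appendix} directly. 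For the second claim I insert the PL inequality $\norm*{\nabla\f(\x^t)}^2\ge 2\mu(\f(\x^t)-\f^*)$ into the same bound to obtain the contraction $r_{t+1}\le(1-\gamma\mu)r_t + C$ with $r_t\eqdef\E\f(\x^t)-\f^*$ and $C\eqdef\nicefrac{\sigma^2\gamma^2 L}{G}+\delta$, then unroll the resulting geometric series and bound it to recover \eqref{eq:PL_result_appendix}.

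The main obstacle, and the real content of the theorem, is the reduction step in the first paragraph: recognizing that minimizing the validation loss over the entire simplex majorizes (up to $\delta$) the step that averages only over the unknown set $\cG$, so that everything downstream collapses to textbook nonconvex/PL \algname{SGD} bookkeeping. Beyond that, the only delicate points are coupling the auxiliary-solve accuracy $\delta$ correctly with the conditional expectation $\E_t$ (so that the per-step $+\delta$ survives the tower property), and tracking the absolute constants in the variance term, which are pinned down by the choice $\gamma\le\nicefrac{1}{2L}$.
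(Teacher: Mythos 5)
Your proposal is correct and follows essentially the same route as the paper's proof: the key step in both is bounding $\min_{w\in\Delta_1^n}\f\rbr*{\x^t-\gamma\sum_i w_i\g_i}$ by its value at the uniform weights on $\cG$, then applying the $L$-smoothness descent lemma to the ideal average $\widehat{g}^{\,t}$ with conditional unbiasedness and variance $\nicefrac{\sigma^2}{G}$, paying $+\delta$ per step, and finishing with telescoping (non-convex case) or PL unrolling. The only differences are cosmetic — your bias--variance split after taking $\E_t$ yields slightly tighter intermediate constants than the paper's $\norm*{a}^2\le 2\norm*{a-b}^2+2\norm*{b}^2$ splitting, but both land on the same final bounds under $\gamma\le\nicefrac{1}{2L}$.
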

\begin{proof}
    We write $\g_i^t$ or simply $\g_i$ instead of $\g_i(\x^t, \xiv_i^t)$ when there is no ambiguity.
    Then, the update rule in \Algn can be written as
    \begin{equation*}
        \x^{t+1} = \x^t - \gamma \sumin \w^{t+1}_i  \g_{i}(\x^{t}),
    \end{equation*}
    where $\w^{t+1}$ is an approximate solution of 
    \begin{eqnarray*}
        \min_{\w \Delta_1^n} \f\rbr*{\x^t - \gamma \sumin \w_i \g_{i}(\x^{t})}
    \end{eqnarray*}
    that satisfies
    \begin{eqnarray*}
        \E\sbr*{\f\rbr*{\x^{t+1}} | \x^t, \xiv^t} - \min_{\w} \f\rbr*{\x^t - \gamma \sumin \w_i \g_{i}(\x^{t})} \le \delta.
    \end{eqnarray*}
    By definition of the minimum, we have 
    \begin{eqnarray*}
        \lefteqn{\min_{\w\in \Delta^\n_1} \f\rbr*{\x^t - \gamma \sumin \w_i  \g_{i}(\x^{t})} \le \f\rbr*{\x^t - \frac{\gamma}{\gn} \sum_{i \in \cG}  \g_{i}(\x^{t})}}
        \\        &\oset{\eqref{eq:lipschitzness}}{\le}& \f\rbr*{\x^{t}} - \frac{\gamma}{\gn} \inp*{\nabla\f\rbr*{\x^{t}}}{\sum_{i \in \cG}  \g_{i}(\x^{t})} + \frac{L\gamma^2}{2}\norm*{\frac{1}{\gn}\sum_{i \in \cG}  \g_{i}(\x^{t})}^{2}
        \\
         &\le& \f\rbr*{\x^{t}} - \frac{\gamma}{\gn} \inp*{\nabla\f\rbr*{\x^{t}}}{\sum_{i \in \cG}  \g_{i}(\x^{t})} + \gamma^2 L\norm*{\nabla\f\rbr*{\x^{t}} - \frac{1}{\gn}\sum_{i \in \cG}  \g_{i}(\x^{t})}^{2} + \gamma^2 L\norm*{\nabla\f\rbr*{\x^{t}}}^{2}.
    \end{eqnarray*}
    The last two inequalities imply 
\begin{eqnarray}
        \lefteqn{\E\sbr*{\f\rbr*{\x^{t+1}} | \x^t, \xiv^t} } \notag
        \\
        &\le& \f\rbr*{\x^{t}} 
        - \frac{\gamma}{\gn} 
        \inp*{\nabla\f\rbr*{\x^{t}}}{\sum_{i \in \cG}  \g_{i}(\x^{t})} 
        + \gamma^2 L
        \norm*{
        \nabla\f\rbr*{\x^{t}} - \frac{\sum_{i \in \cG}  \g_{i}(\x^{t})}{\gn}
        }^{2} + \gamma^2 L
        \norm*{\nabla\f\rbr*{\x^{t}}}^{2} + \delta.
        \nonumber
    \end{eqnarray}
    Taking the full expectation we get
    \begin{eqnarray}
        \E[f(x^{t+1})] &\leq& \E[f(x^t)] - \gamma(1 - \gamma L)\E\left[\|\nabla f(x^t)\|^2\right] + \gamma^2L\E\left[\left\|\nabla\f\rbr*{\x^{t}} - \frac{\sum_{i \in \cG}  \g_{i}(\x^{t})}{\gn}\right\|^2\right] + \delta \notag\\
        &\overset{\gamma \leq \frac{1}{2L}}{\leq}&\E[f(x^t)] - \frac{\gamma}{2}\E\left[\|\nabla f(x^t)\|^2\right] + \frac{\gamma^2L}{G^2}\sum\limits_{i\in \cG}\E\left[\left\|\nabla\f\rbr*{\x^{t}} - \g_{i}(\x^{t})\right\|^2\right] + \delta \notag\\
        &\overset{\eqref{eq:xi-var}}{\leq}& \E[f(x^t)] - \frac{\gamma}{2}\E\left[\|\nabla f(x^t)\|^2\right] + \frac{\gamma^2 L\sigma^2}{G} + \delta. \label{eq:suff_decr}
    \end{eqnarray}
    The above is equivalent to
    \begin{eqnarray*}
        \frac{\gamma}{2}\E \norm*{\nabla\f\rbr*{\x^{t}}}^{2}
        \le \E\f\rbr*{\x^{t}} -  \E\f\rbr*{\x^{t+1}} + \frac{\sigma^2 \gamma^2 L}{\gn} + \delta,
    \end{eqnarray*}
    which concludes the first part of the proof.
    
    Next, summing the inequality for $t \in \{0,1,\dots,T-1\}$
    leads to
    \begin{eqnarray*}
        \frac{1}{T}\sum_{t=0}^{T-1}\E \norm*{\nabla\f\rbr*{\x^{t}}}^{2}
        &\le& \frac{2\rbr*{\f\rbr*{\x^{0}} -  \E\f\rbr*{\x^{T}}}}{T\gamma} + \frac{2\sigma^2 \gamma L }{\gn} + \frac{2\delta }{\gamma}
        \\
        &\le& 2\rbr*{\f\rbr*{\x^{0}} - \f\rbr*{\x^{*}}} + \frac{2\sigma^2 \gamma L }{\gn} + \frac{2\delta }{\gamma}.
    \end{eqnarray*}
    
    Combining~\eqref{eq:suff_decr} with~\eqref{eq:pl} gives
    \begin{eqnarray*}
        \E[f(x^{t+1}) - f^*] &\leq& (1 - \gamma\mu)\E[f(x^t) - f^*] + \frac{\gamma^2 L\sigma^2}{G} + \delta.
    \end{eqnarray*}
    Unrolling the above recurrence, we obtain \eqref{eq:PL_result_appendix}.
\end{proof}

\newpage

\section{Additional Experiments}\label{appendix:extra_exps}

\subsection{Results without Additional Validation Dataset}

In this section, we provide experiments without an additional dataset. Instead, we use the target client's train dataset to approximately solve the problem in Line~\ref{lst:line:aux_problem}. The results are provided in Figures~\ref{wfig:cifar-noval-0.5}-\ref{wfig:cifar-noval-0.99} (image classification) and Figures~\ref{wfig:bert-noval-0.5}-\ref{wfig:bert-noval-0.99} (text classification). They show that \Algn's behavior with and without additional validation data is almost the same. Thus, these preliminary results give evidence that our method can be efficient in practice even when an extra validation dataset is unavailable. 

\begin{figure*}[t]
    \begin{minipage}[htp]{0.24\textwidth}
        \centering
        \includegraphics[width=1\linewidth]{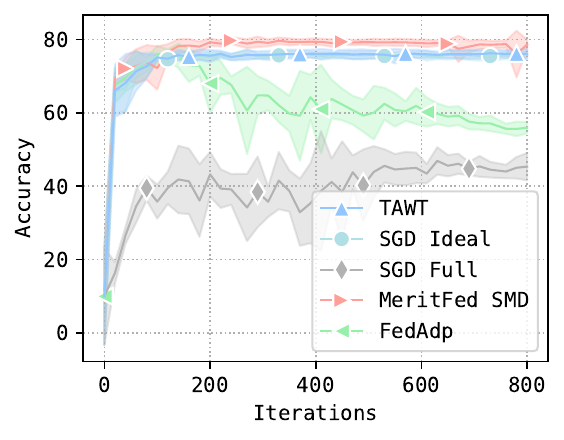}
        \includegraphics[width=1\linewidth]{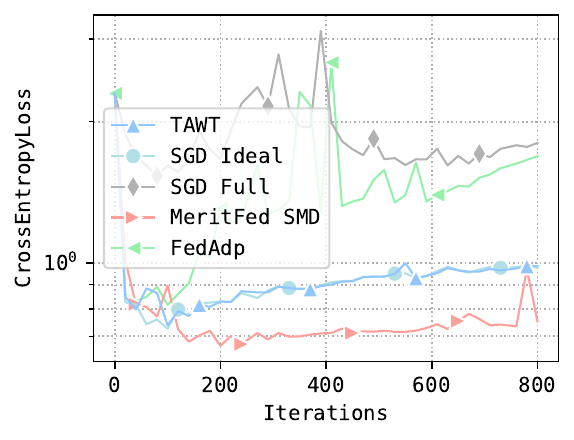}
        \includegraphics[width=1\linewidth]{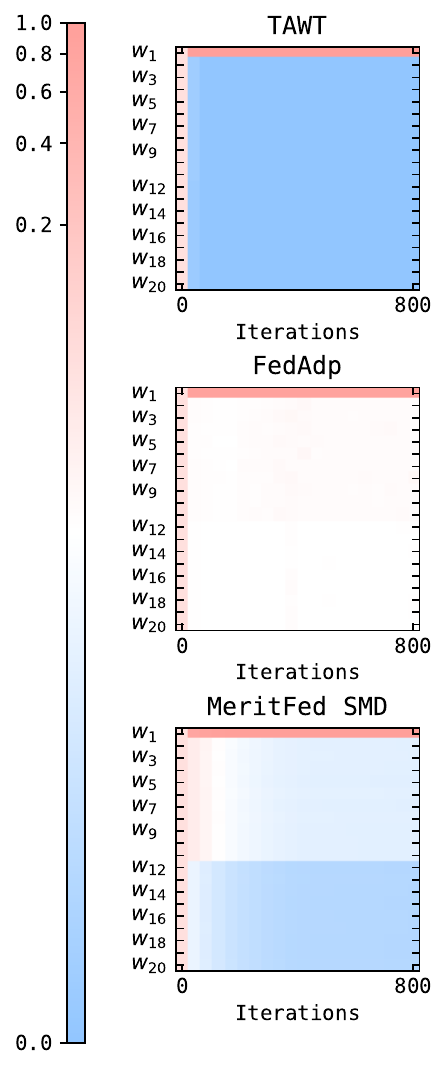}
        \caption{CIFAR10: $\alpha = 0.5$}
        \label{wfig:cifar-noval-0.5}
    \end{minipage}
\hfill
    \begin{minipage}[htp]{0.24\textwidth}
        \centering
        \includegraphics[width=1\linewidth]{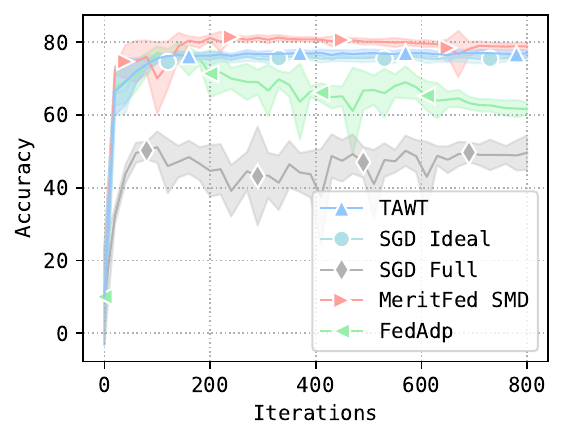}
        \includegraphics[width=1\linewidth]{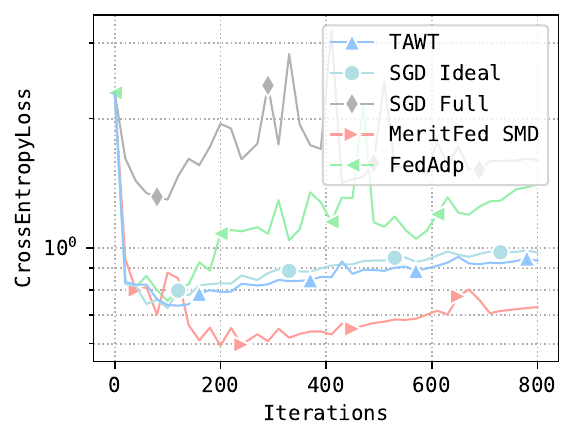}
        \includegraphics[width=1\linewidth]{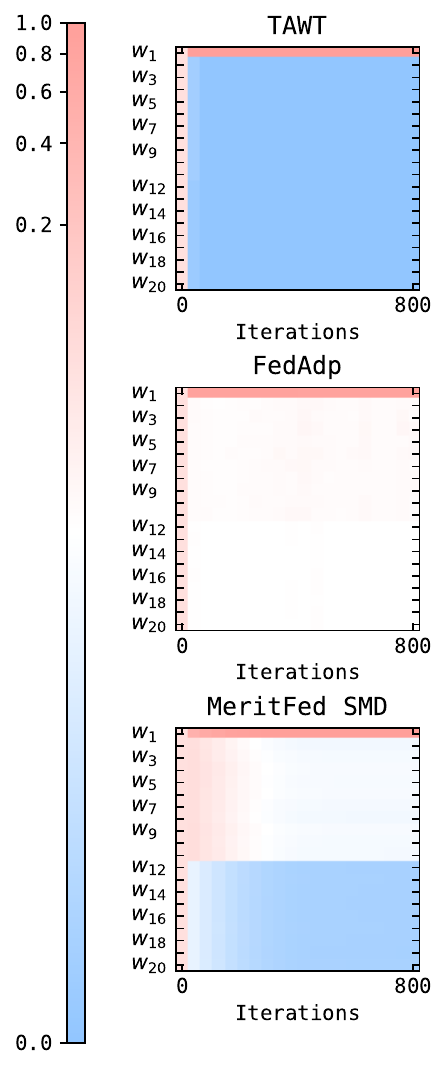}
        \caption{CIFAR10: $\alpha = 0.7$}
        \label{wfig:cifar-noval-0.7}
    \end{minipage}
\hfill
    \begin{minipage}[htp]{0.24\textwidth}
        \centering
        \includegraphics[width=1\linewidth]{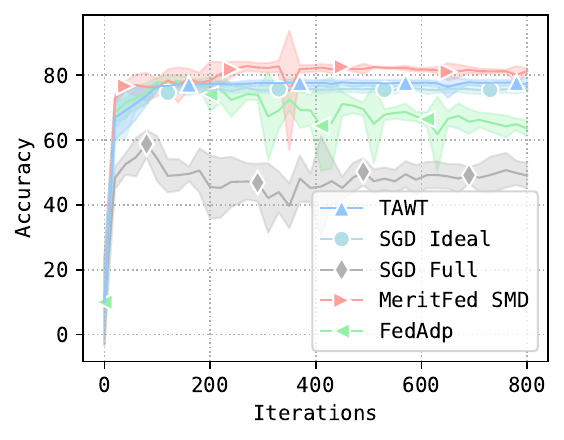}
        \includegraphics[width=1\linewidth]{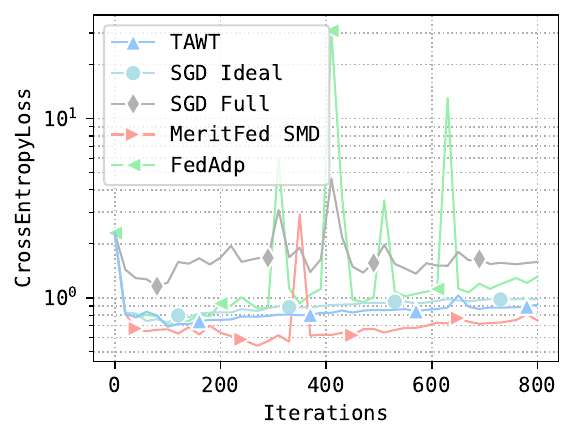}
        \includegraphics[width=1\linewidth]{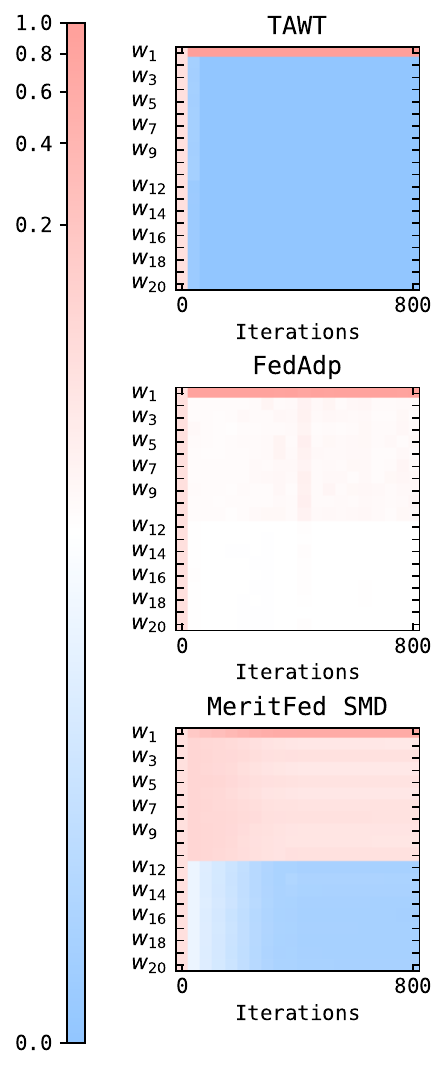}
        \caption{CIFAR10: $\alpha = 0.9$}
        \label{wfig:cifar-noval-0.9}
    \end{minipage}
\hfill
    \begin{minipage}[htp]{0.24\textwidth}
        \centering
        \includegraphics[width=1\linewidth]{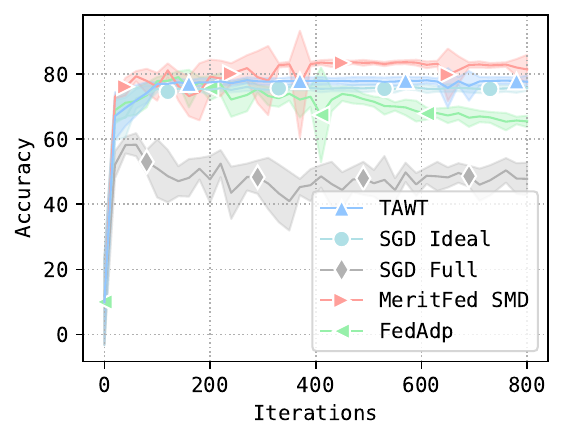}
        \includegraphics[width=1\linewidth]{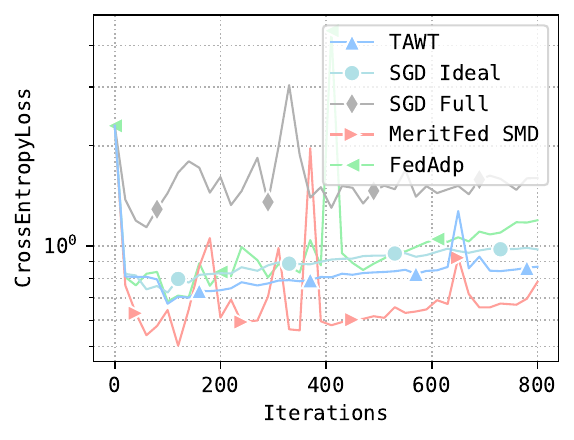}
        \includegraphics[width=1\linewidth]{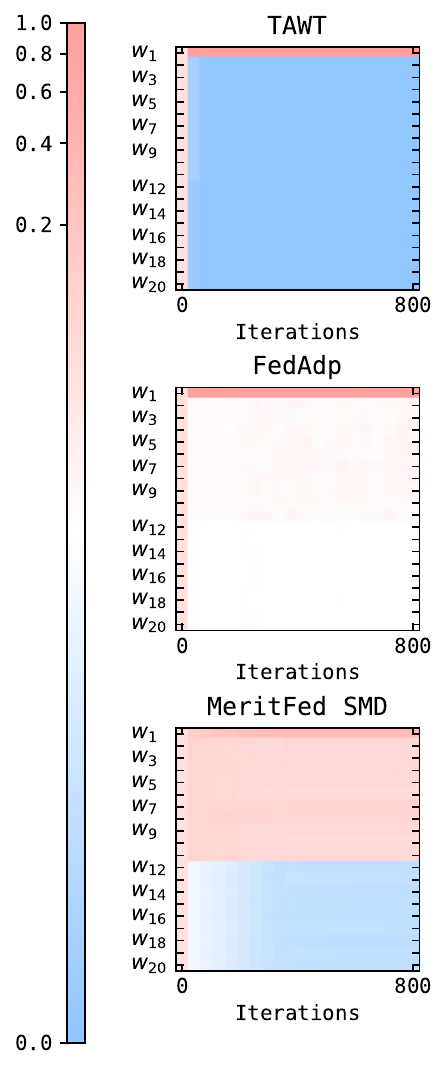}
        \caption{CIFAR10: $\alpha = 0.99$}
        \label{wfig:cifar-noval-0.99}
    \end{minipage}
\end{figure*}

\begin{figure*}[t]
    \begin{minipage}[htp]{0.24\textwidth}
        \centering
        \includegraphics[width=1\linewidth]{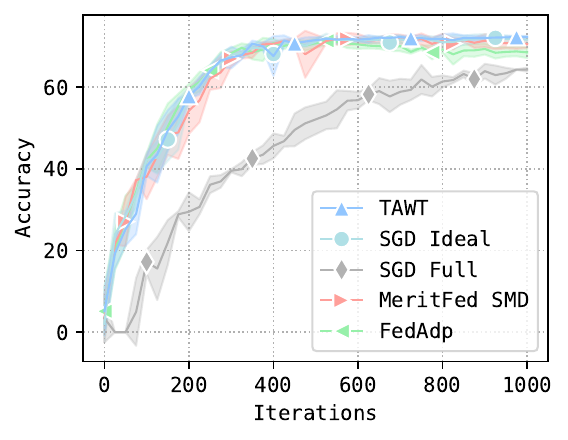}
        \includegraphics[width=1\linewidth]{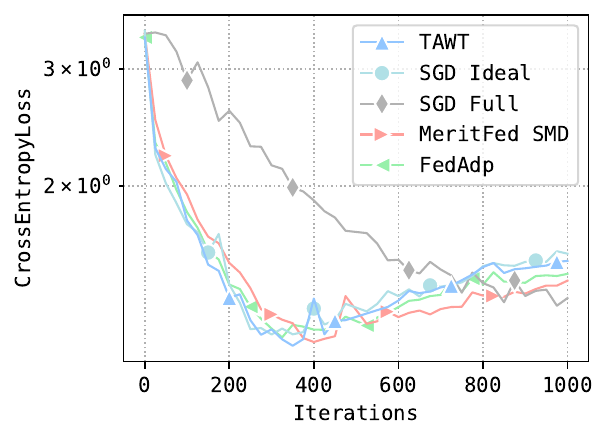}
        \includegraphics[width=1\linewidth]{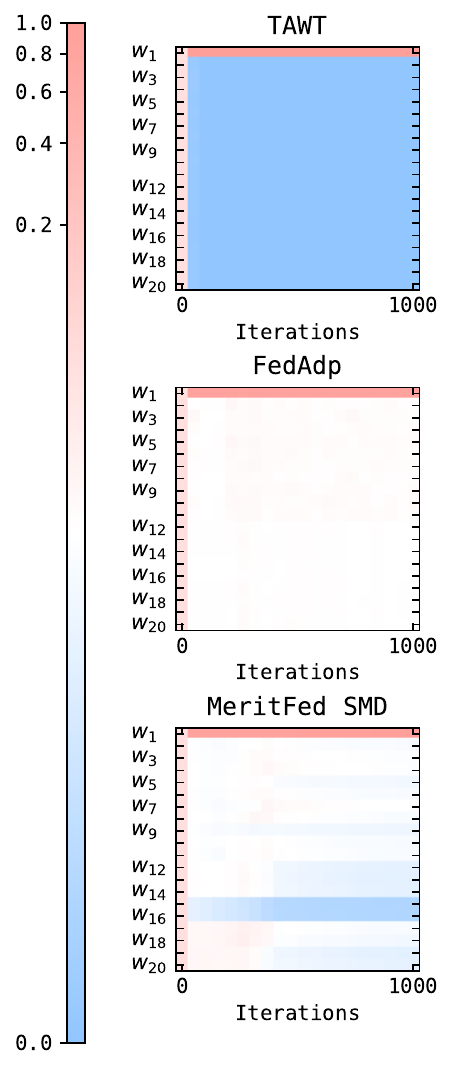}
        \caption{GoEmotions: $\alpha = 0.5$}
        \label{wfig:bert-noval-0.5}
    \end{minipage}
\hfill
    \begin{minipage}[htp]{0.24\textwidth}
        \centering
        \includegraphics[width=1\linewidth]{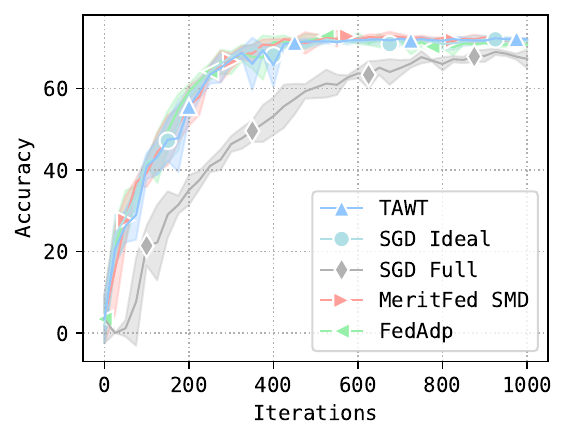}
        \includegraphics[width=1\linewidth]{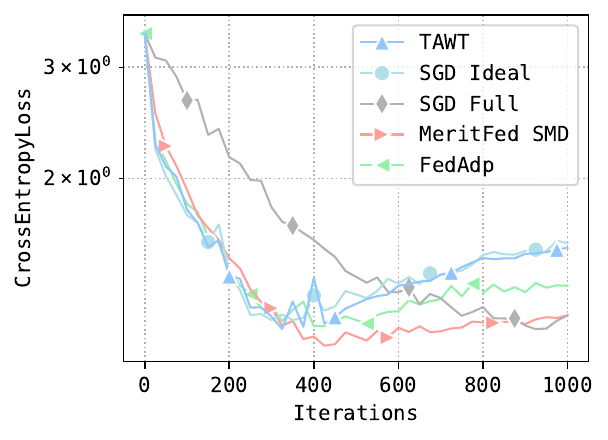}
        \includegraphics[width=1\linewidth]{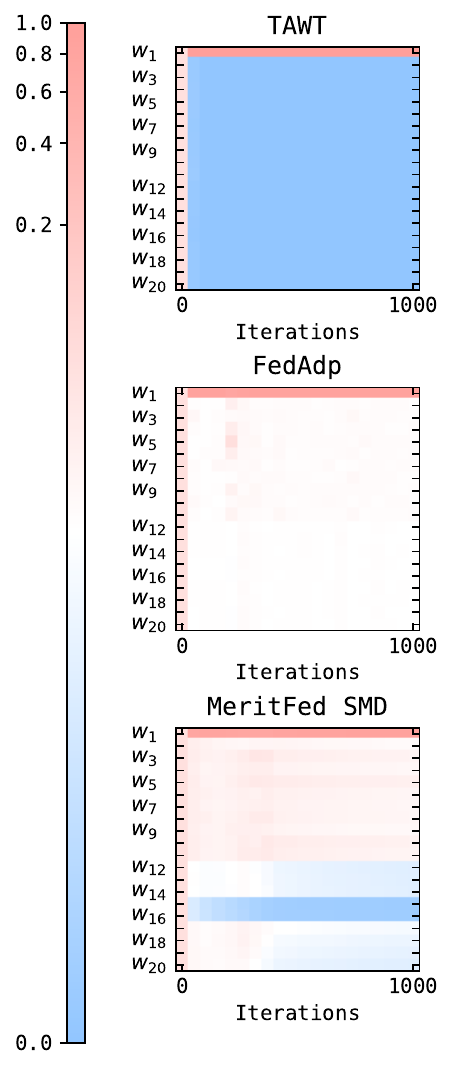}
        \caption{GoEmotions: $\alpha = 0.7$}
        \label{wfig:bert-noval-0.7}
    \end{minipage}
\hfill
    \begin{minipage}[htp]{0.24\textwidth}
        \centering
        \includegraphics[width=1\linewidth]{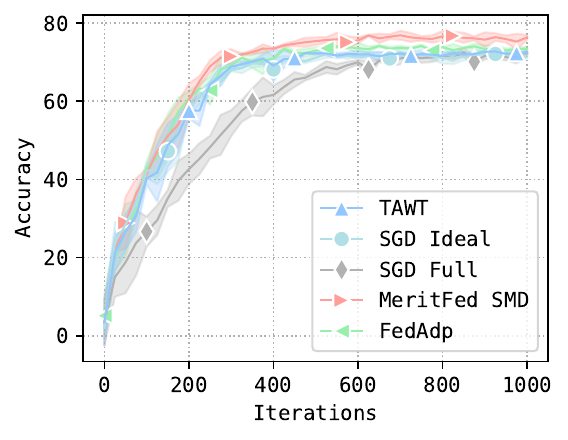}
        \includegraphics[width=1\linewidth]{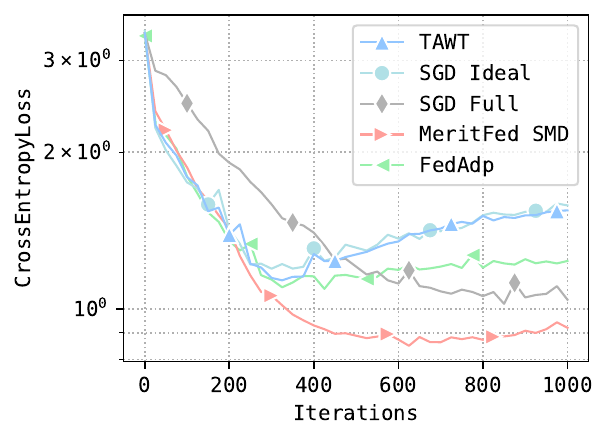}
        \includegraphics[width=1\linewidth]{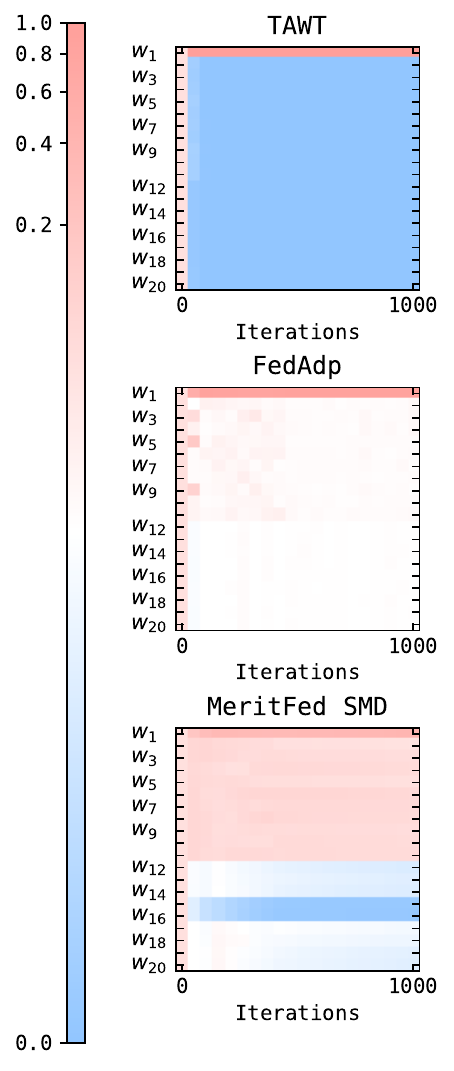}
        \caption{GoEmotions: $\alpha = 0.9$}
        \label{wfig:bert-noval-0.9}
    \end{minipage}
\hfill
    \begin{minipage}[htp]{0.24\textwidth}
        \centering
        \includegraphics[width=1\linewidth]{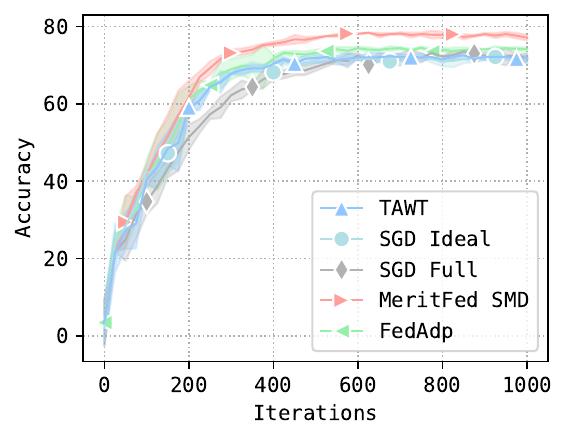}
        \includegraphics[width=1\linewidth]{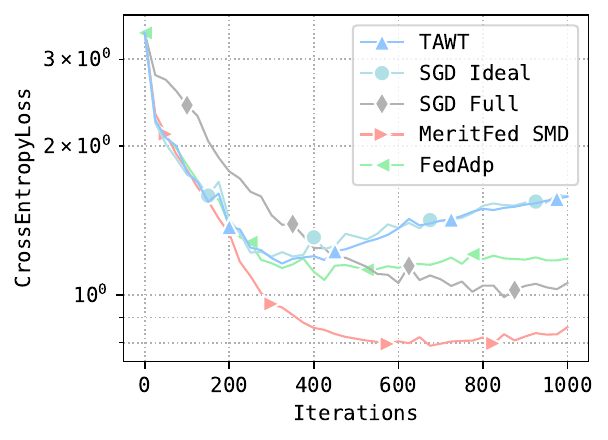}
        \includegraphics[width=1\linewidth]{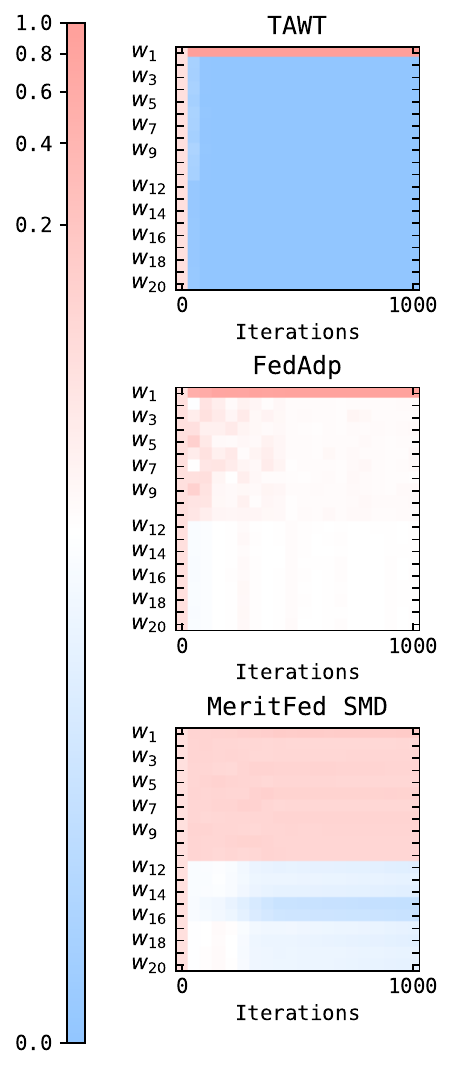}
        \caption{GoEmotions: $\alpha = 0.99$}
        \label{wfig:bert-noval-0.99}
    \end{minipage}
\end{figure*}

\subsection{Robustness against Byzantine Attacks}

\Algn is robust to Byzantine attacks since our proof of Theorem~\ref{thm:main_result} does not make any assumptions on the vectors received from the workers having different data distribution than the target client. This means that any worker $i \not\in \mathcal{G}$ can send arbitrary vectors at each iteration, and \Algn will still be able to converge. Moreover, \Algn can tolerate Byzantine attacks even if Byzantine workers form a majority, e.g., the method converges even if all clients are Byzantine except for the target one. 

To test the Byzantine robustness of our method on the mean estimation problem, we chose the total number of peers equal to $55$ with the $50$ clients being malicious. Malicious clients know the target distribution of the first $5$ client and use it for performing IPM (with parameter $\varepsilon_{\text{IPM}} = 0.1$) \citep{xie2019fall}  and ALIE (with parameter $z_{\text{ALIE}} = 100$) \citep{baruch2019little} attacks. We also consider the Bit Flipping\footnote{Byzantine workers compute stochastic gradients $g_i^k$ and send $-g_i^k$ to the server.} (BF) and the Random Noise\footnote{Byzantine workers compute stochastic gradients $g_i^k$ and send $g_i^k + \sigma\xi_i^k$ to the server, where $\xi_i^k \sim \cN(0, \mI)$ and $\sigma = 1$.} (RN) attacks. The following choice of parameters is used: each client has $1000$ samples from the corresponding distribution. The dimension of the problem is $d = 10$, learning rate $= 0.01$, number of steps for Mirror Descent $= 10$, learning rate for Mirror Descent $= 3.5$.

The results are presented in Figures~\ref{wfig:cifar-byz-0.5}-\ref{wfig:cifar-byz-0.99}. As expected, \algname{SGD Full} does not converge under the considered attacks, and \algname{SGD Ideal} shows the best results since, by design, it averages only with non-Byzantine workers. \algname{FedAdp} has poor performance under ALIE attack and is quite unstable under RN attack. As in other experiments, \algname{TAWT} is very biased towards the target client, which helps \algname{TAWT} to tolerate Byzantine attacks, but it does not take extra advantage of averaging with clients having the same distribution. Finally, \algname{MeritFed} consistently shows comparable results to \algname{SGD Ideal}.

\begin{figure}[H]
    \begin{minipage}[htp]{0.24\textwidth}
        \includegraphics[width=1\linewidth]{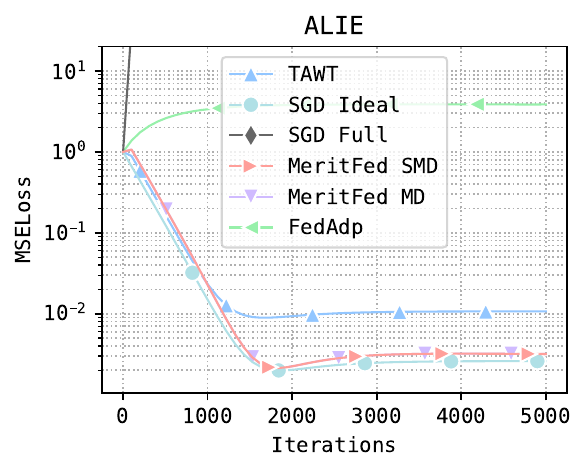}
        \includegraphics[width=1\linewidth]{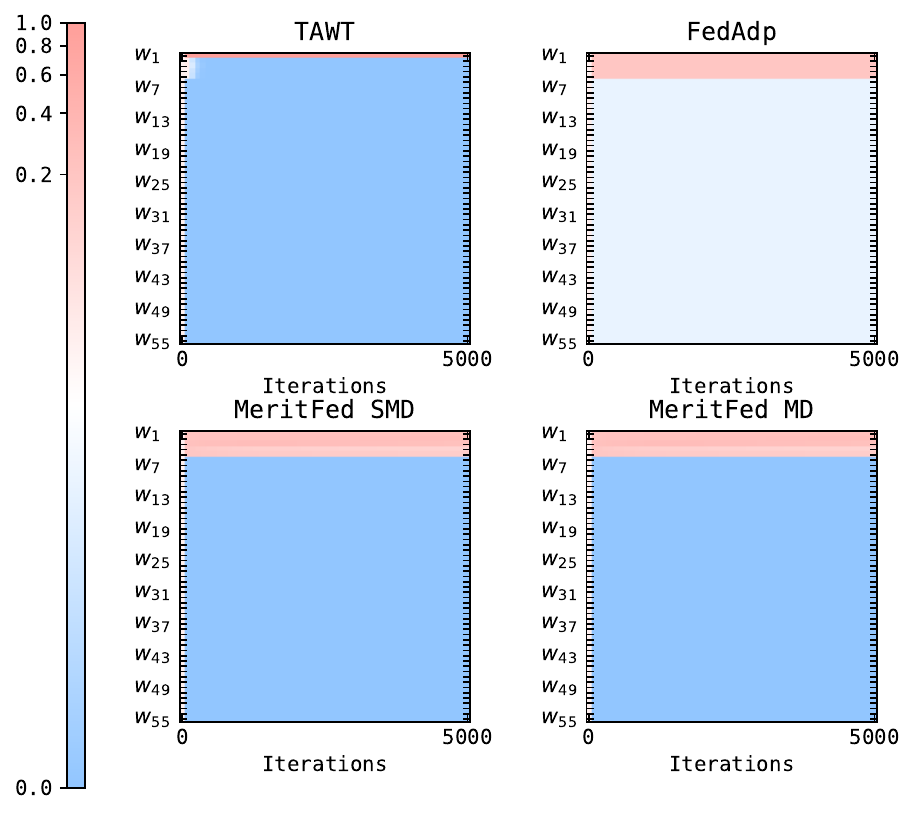}
        \caption{ALIE}
        \label{wfig:cifar-byz-0.5}
    \end{minipage}
\hfill
    \begin{minipage}[htp]{0.24\textwidth}
        \centering
        \includegraphics[width=1\linewidth]{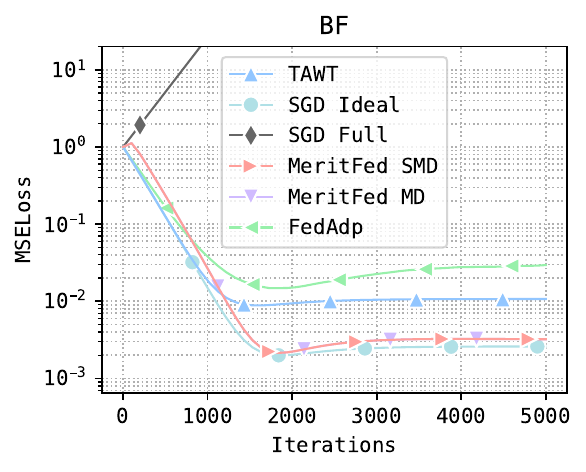}
        \includegraphics[width=1\linewidth]{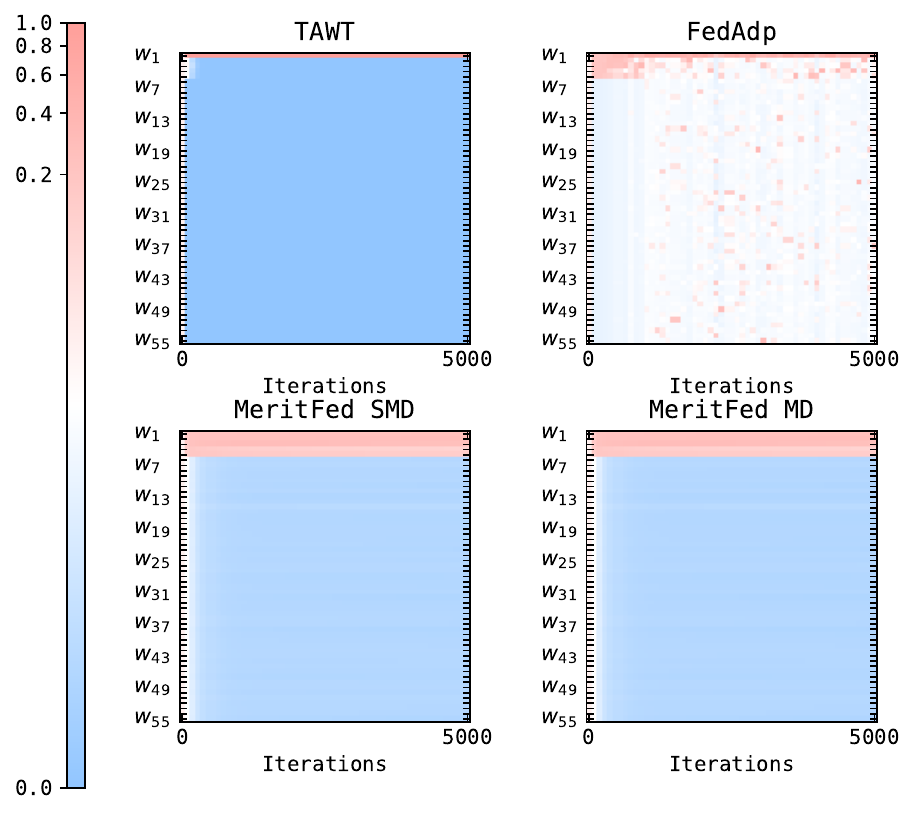}
        \caption{BF}
        \label{wfig:cifar-byz-0.7}
    \end{minipage}
\hfill
    \begin{minipage}[htp]{0.24\textwidth}
        \centering
        \includegraphics[width=1\linewidth]{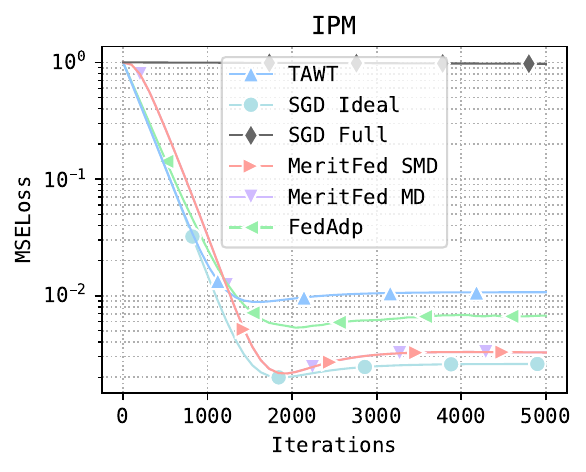}
        \includegraphics[width=1\linewidth]{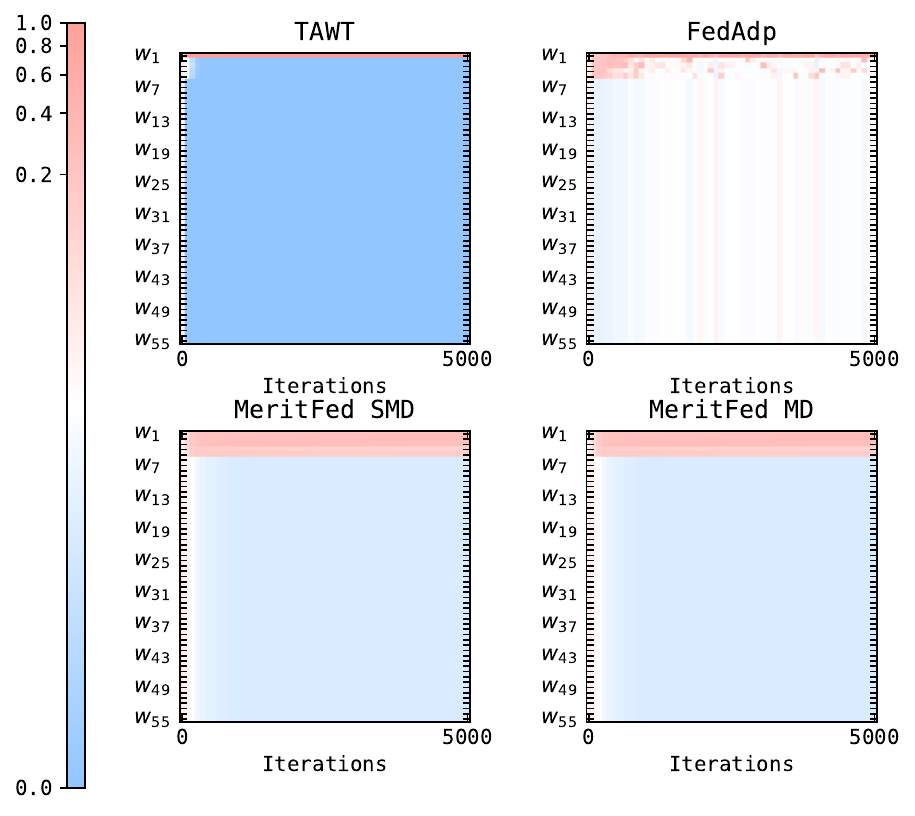}
        \caption{IPM}
        \label{wfig:cifar-byz-0.9}
    \end{minipage}
\hfill
    \centering
    \begin{minipage}[htp]{0.24\textwidth}
        \centering
        \includegraphics[width=1\linewidth]{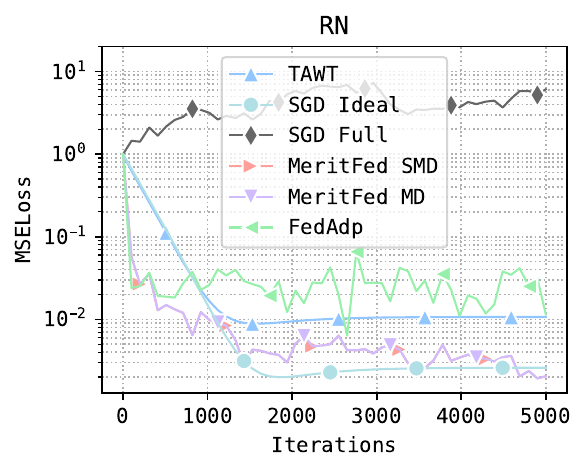}
        \includegraphics[width=1\linewidth]{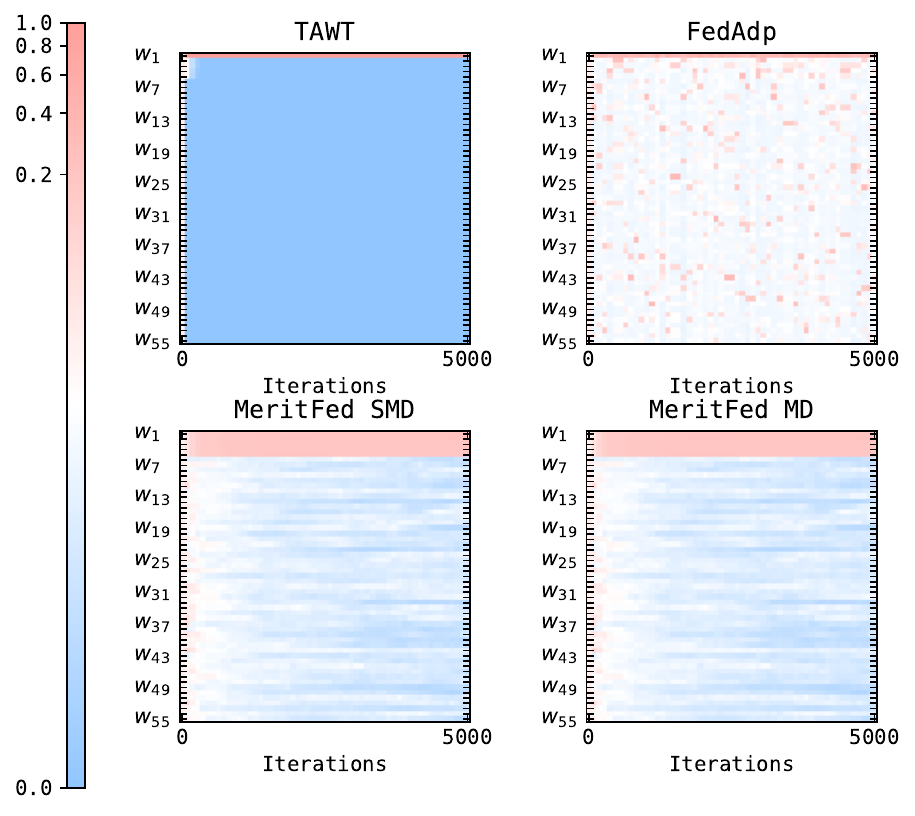}
        \caption{RN}
        \label{wfig:cifar-byz-0.99}
    \end{minipage}
\end{figure}

\subsection{ResNet18+CIFAR10: 40 workers}
In the mean estimation problem, we generate the data and can control the number of workers. Therefore, for this problem we have many clients participating in the training. 

However, for the other two tasks, datasets are fixed. Therefore, we limited the number of workers to 20 to have enough data on each client (given the splitting strategy) without repetition. That is, each data sample (image or tokens) from the original datasets belongs to no more than 1 client. Therefore, to run experiments with more workers we either need to have more data or allow repetitions in data on the clients. 

In the additional experiments, we have 40 clients where the new 20 clients are just copies of the first 20 clients. The experimental setup follows the same data partitioning idea as presented in the paper and deals with for values of heterogeneity values across clients $\alpha$. 
For \algname{MeritFed} each worker calculates stochastic gradient using a batch size of $75$; then the server uses Mirror Descent (or its stochastic version) with a batch-size of $90$ (in case of stochastic version) and a learning rate of $0.1$ to update weights of aggregation, and then performs a model parameters update with a learning rate of $0.01$.


The results presented on Figures~\ref{fig:40cifar-val-0.5}-\ref{fig:40cifar-val-0.99}. Overall, the conclusions are consistent with what we have in the experiment with 20 workers, further supporting the scalability of \Algn.

\begin{figure}[H]
    \begin{minipage}[htp]{0.24\textwidth}
        \centering
        \includegraphics[width=1\linewidth]{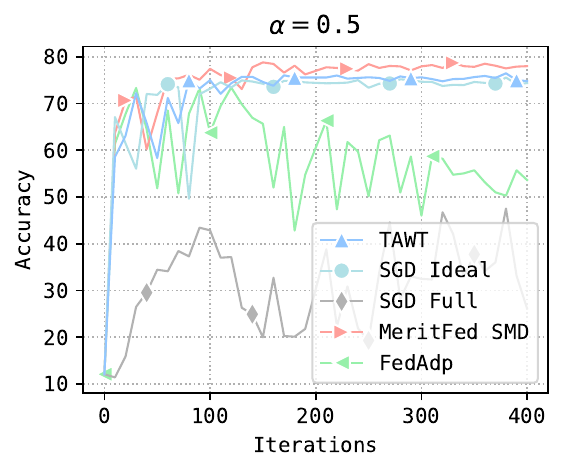}
        \includegraphics[width=1\linewidth]{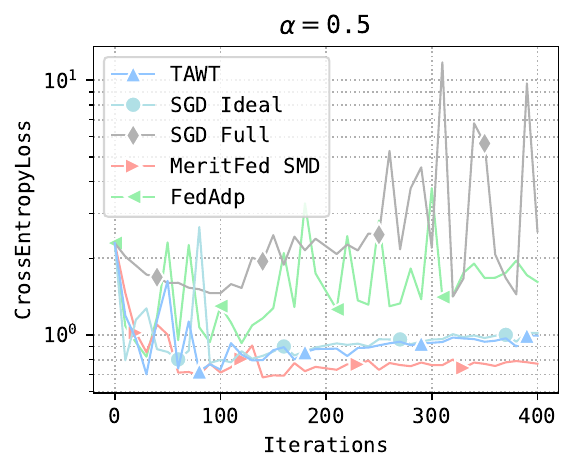}
        \includegraphics[width=1\linewidth]{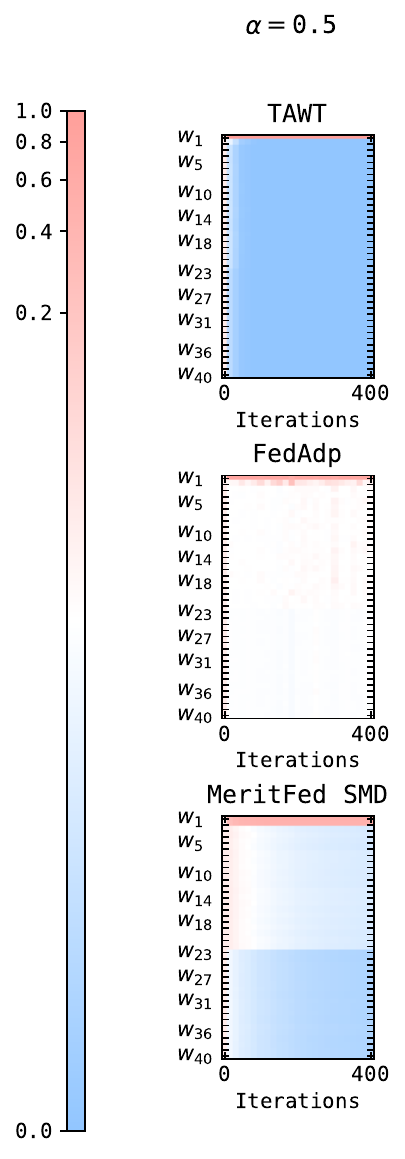}
        \vspace{-0.8cm}
        \caption{CIFAR10: $\alpha = 0.5$}
        \label{fig:40cifar-val-0.5}
    \end{minipage}
\hfill
    \begin{minipage}[htp]{0.24\textwidth}
        \centering
        \includegraphics[width=1\linewidth]{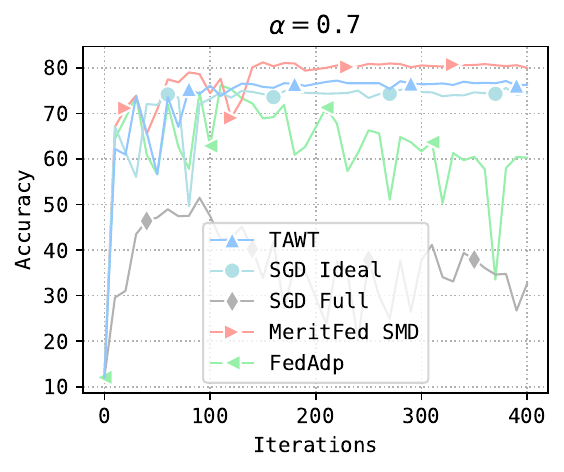}
        \includegraphics[width=1\linewidth]{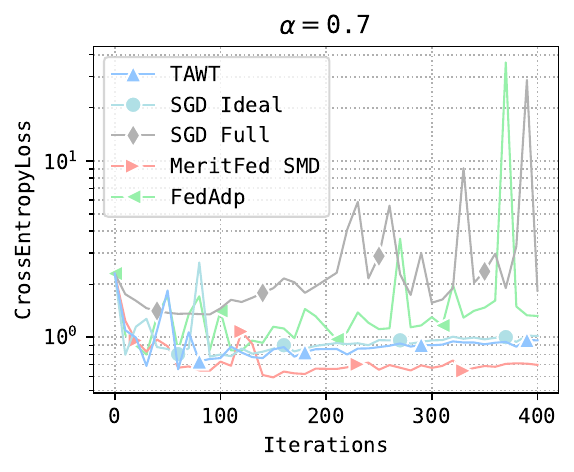}
        \includegraphics[width=1\linewidth]{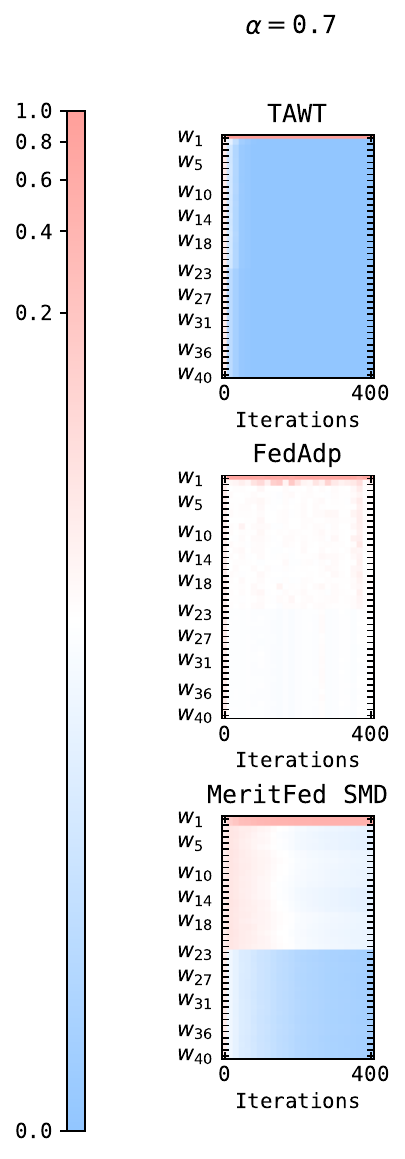}
        \vspace{-0.8cm}
        \caption{CIFAR10: $\alpha = 0.7$}
        \label{fig:40cifar-val-0.7}
    \end{minipage}
\hfill
    \begin{minipage}[htp]{0.24\textwidth}
        \centering
        \includegraphics[width=1\linewidth]{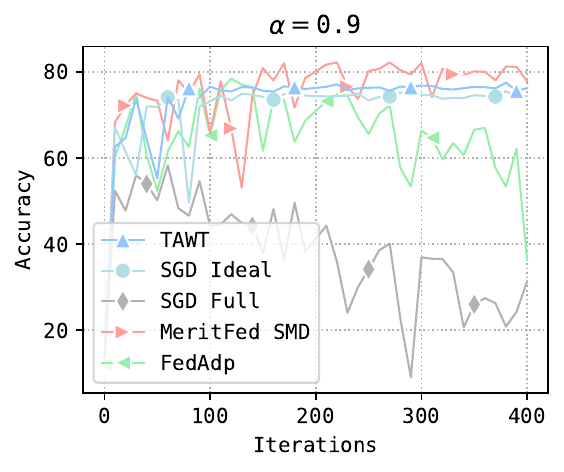}
        \includegraphics[width=1\linewidth]{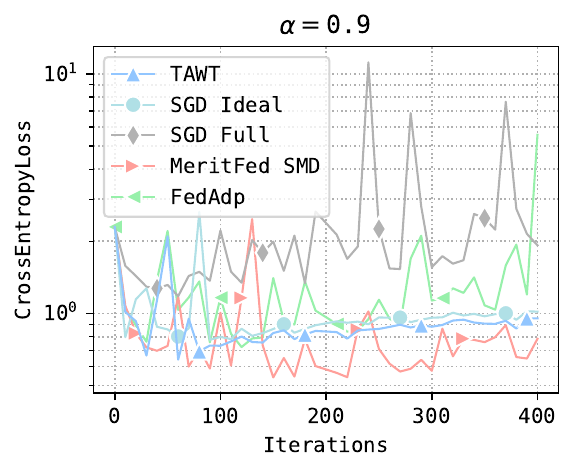}
        \includegraphics[width=1\linewidth]{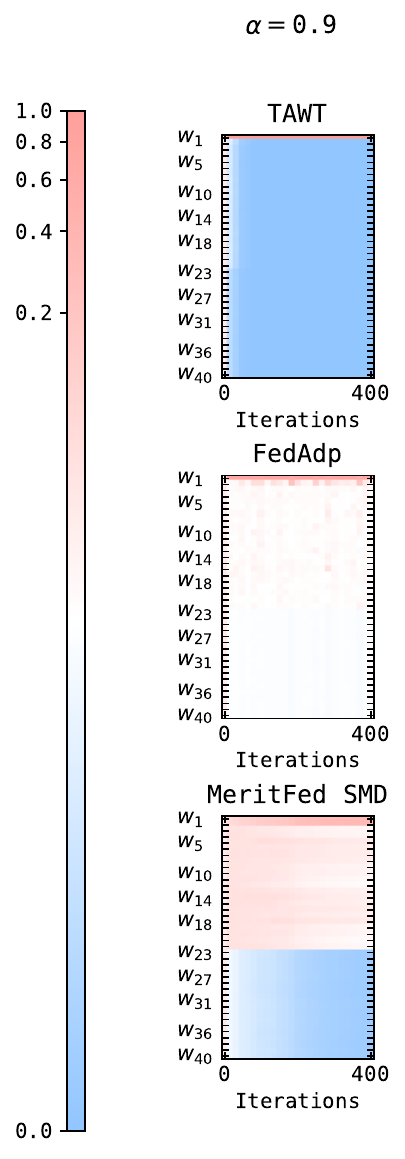}
        \vspace{-0.8cm}
        \caption{CIFAR10: $\alpha = 0.9$}
        \label{fig:40cifar-val-0.9}
    \end{minipage}
\hfill
    \begin{minipage}[htp]{0.24\textwidth}
        \centering
        \includegraphics[width=1\linewidth]{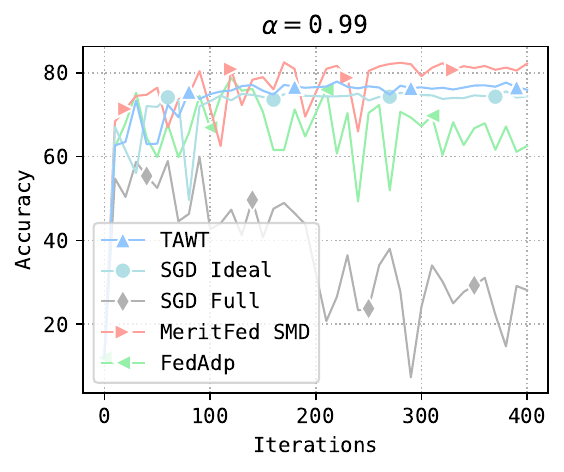}
        \includegraphics[width=1\linewidth]{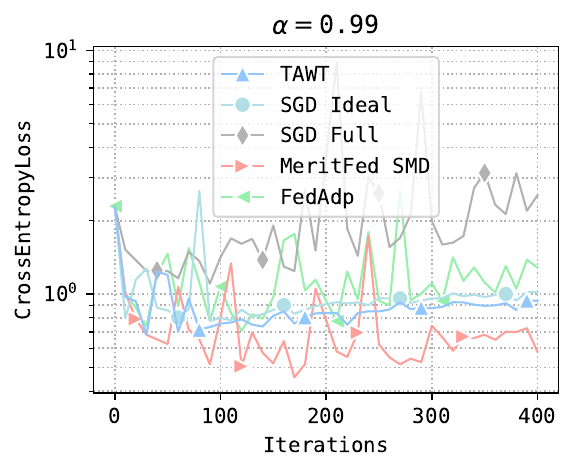}
        \includegraphics[width=1\linewidth]{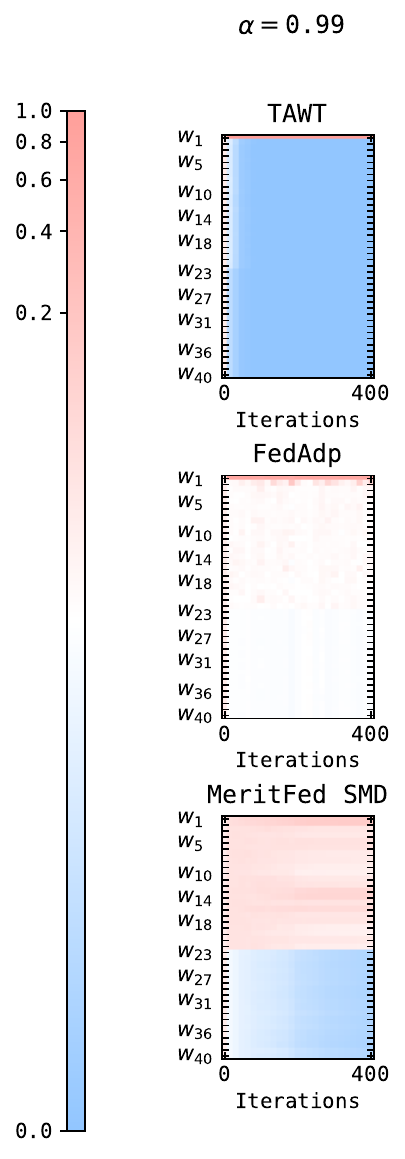}
        \vspace{-0.8cm}
        \caption{CIFAR10: $\alpha = 0.99$.}
        \label{fig:40cifar-val-0.99}
    \end{minipage}
    
\end{figure}


\newpage
\section*{NeurIPS Paper Checklist}

\begin{enumerate}

\item {\bf Claims}
    \item[] Question: Do the main claims made in the abstract and introduction accurately reflect the paper's contributions and scope?
    \item[] Answer: \answerYes{} 
    \item[] Justification: the new method are introduced in Section~\ref{sec:meritfed}, main convergence results are provided in Section~\ref{sec:convergence}, and the numerical results are provided in Section~\ref{sec:numerical_results}. 
    \item[] Guidelines:
    \begin{itemize}
        \item The answer NA means that the abstract and introduction do not include the claims made in the paper.
        \item The abstract and/or introduction should clearly state the claims made, including the contributions made in the paper and important assumptions and limitations. A No or NA answer to this question will not be perceived well by the reviewers. 
        \item The claims made should match theoretical and experimental results, and reflect how much the results can be expected to generalize to other settings. 
        \item It is fine to include aspirational goals as motivation as long as it is clear that these goals are not attained by the paper. 
    \end{itemize}

\item {\bf Limitations}
    \item[] Question: Does the paper discuss the limitations of the work performed by the authors?
    \item[] Answer: \answerYes{}
    \item[] Justification: see second paragraph of Section~\ref{sec:conclusion}
    \item[] Guidelines:
    \begin{itemize}
        \item The answer NA means that the paper has no limitation while the answer No means that the paper has limitations, but those are not discussed in the paper. 
        \item The authors are encouraged to create a separate "Limitations" section in their paper.
        \item The paper should point out any strong assumptions and how robust the results are to violations of these assumptions (e.g., independence assumptions, noiseless settings, model well-specification, asymptotic approximations only holding locally). The authors should reflect on how these assumptions might be violated in practice and what the implications would be.
        \item The authors should reflect on the scope of the claims made, e.g., if the approach was only tested on a few datasets or with a few runs. In general, empirical results often depend on implicit assumptions, which should be articulated.
        \item The authors should reflect on the factors that influence the performance of the approach. For example, a facial recognition algorithm may perform poorly when image resolution is low or images are taken in low lighting. Or a speech-to-text system might not be used reliably to provide closed captions for online lectures because it fails to handle technical jargon.
        \item The authors should discuss the computational efficiency of the proposed algorithms and how they scale with dataset size.
        \item If applicable, the authors should discuss possible limitations of their approach to address problems of privacy and fairness.
        \item While the authors might fear that complete honesty about limitations might be used by reviewers as grounds for rejection, a worse outcome might be that reviewers discover limitations that aren't acknowledged in the paper. The authors should use their best judgment and recognize that individual actions in favor of transparency play an important role in developing norms that preserve the integrity of the community. Reviewers will be specifically instructed to not penalize honesty concerning limitations.
    \end{itemize}

\item {\bf Theory Assumptions and Proofs}
    \item[] Question: For each theoretical result, does the paper provide the full set of assumptions and a complete (and correct) proof?
    \item[] Answer: \answerYes{} 
    \item[] Justification: see Section~\ref{sec:convergence} and the Appendix.
    \item[] Guidelines:
    \begin{itemize}
        \item The answer NA means that the paper does not include theoretical results. 
        \item All the theorems, formulas, and proofs in the paper should be numbered and cross-referenced.
        \item All assumptions should be clearly stated or referenced in the statement of any theorems.
        \item The proofs can either appear in the main paper or the supplemental material, but if they appear in the supplemental material, the authors are encouraged to provide a short proof sketch to provide intuition. 
        \item Inversely, any informal proof provided in the core of the paper should be complemented by formal proofs provided in appendix or supplemental material.
        \item Theorems and Lemmas that the proof relies upon should be properly referenced. 
    \end{itemize}

    \item {\bf Experimental Result Reproducibility}
    \item[] Question: Does the paper fully disclose all the information needed to reproduce the main experimental results of the paper to the extent that it affects the main claims and/or conclusions of the paper (regardless of whether the code and data are provided or not)?
    \item[] Answer: \answerYes{} 
    \item[] Justification: see Section~\ref{sec:numerical_results}
    \item[] Guidelines:
    \begin{itemize}
        \item The answer NA means that the paper does not include experiments.
        \item If the paper includes experiments, a No answer to this question will not be perceived well by the reviewers: Making the paper reproducible is important, regardless of whether the code and data are provided or not.
        \item If the contribution is a dataset and/or model, the authors should describe the steps taken to make their results reproducible or verifiable. 
        \item Depending on the contribution, reproducibility can be accomplished in various ways. For example, if the contribution is a novel architecture, describing the architecture fully might suffice, or if the contribution is a specific model and empirical evaluation, it may be necessary to either make it possible for others to replicate the model with the same dataset, or provide access to the model. In general. releasing code and data is often one good way to accomplish this, but reproducibility can also be provided via detailed instructions for how to replicate the results, access to a hosted model (e.g., in the case of a large language model), releasing of a model checkpoint, or other means that are appropriate to the research performed.
        \item While NeurIPS does not require releasing code, the conference does require all submissions to provide some reasonable avenue for reproducibility, which may depend on the nature of the contribution. For example
        \begin{enumerate}
            \item If the contribution is primarily a new algorithm, the paper should make it clear how to reproduce that algorithm.
            \item If the contribution is primarily a new model architecture, the paper should describe the architecture clearly and fully.
            \item If the contribution is a new model (e.g., a large language model), then there should either be a way to access this model for reproducing the results or a way to reproduce the model (e.g., with an open-source dataset or instructions for how to construct the dataset).
            \item We recognize that reproducibility may be tricky in some cases, in which case authors are welcome to describe the particular way they provide for reproducibility. In the case of closed-source models, it may be that access to the model is limited in some way (e.g., to registered users), but it should be possible for other researchers to have some path to reproducing or verifying the results.
        \end{enumerate}
    \end{itemize}

\item {\bf Open access to data and code}
    \item[] Question: Does the paper provide open access to the data and code, with sufficient instructions to faithfully reproduce the main experimental results, as described in supplemental material?
    \item[] Answer: \answerYes{} 
    \item[] Justification: see Section~\ref{sec:numerical_results}
    \item[] Guidelines:
    \begin{itemize}
        \item The answer NA means that paper does not include experiments requiring code.
        \item Please see the NeurIPS code and data submission guidelines (\url{https://nips.cc/public/guides/CodeSubmissionPolicy}) for more details.
        \item While we encourage the release of code and data, we understand that this might not be possible, so “No” is an acceptable answer. Papers cannot be rejected simply for not including code, unless this is central to the contribution (e.g., for a new open-source benchmark).
        \item The instructions should contain the exact command and environment needed to run to reproduce the results. See the NeurIPS code and data submission guidelines (\url{https://nips.cc/public/guides/CodeSubmissionPolicy}) for more details.
        \item The authors should provide instructions on data access and preparation, including how to access the raw data, preprocessed data, intermediate data, and generated data, etc.
        \item The authors should provide scripts to reproduce all experimental results for the new proposed method and baselines. If only a subset of experiments are reproducible, they should state which ones are omitted from the script and why.
        \item At submission time, to preserve anonymity, the authors should release anonymized versions (if applicable).
        \item Providing as much information as possible in supplemental material (appended to the paper) is recommended, but including URLs to data and code is permitted.
    \end{itemize}

\item {\bf Experimental Setting/Details}
    \item[] Question: Does the paper specify all the training and test details (e.g., data splits, hyperparameters, how they were chosen, type of optimizer, etc.) necessary to understand the results?
    \item[] Answer: \answerYes{} 
    \item[] Justification: see Section~\ref{sec:numerical_results}
    \item[] Guidelines:
    \begin{itemize}
        \item The answer NA means that the paper does not include experiments.
        \item The experimental setting should be presented in the core of the paper to a level of detail that is necessary to appreciate the results and make sense of them.
        \item The full details can be provided either with the code, in appendix, or as supplemental material.
    \end{itemize}

\item {\bf Experiment Statistical Significance}
    \item[] Question: Does the paper report error bars suitably and correctly defined or other appropriate information about the statistical significance of the experiments?
    \item[] Answer: \answerNA{} 
    \item[] Justification: the results are consistent for different runs
    \item[] Guidelines:
    \begin{itemize}
        \item The answer NA means that the paper does not include experiments.
        \item The authors should answer "Yes" if the results are accompanied by error bars, confidence intervals, or statistical significance tests, at least for the experiments that support the main claims of the paper.
        \item The factors of variability that the error bars are capturing should be clearly stated (for example, train/test split, initialization, random drawing of some parameter, or overall run with given experimental conditions).
        \item The method for calculating the error bars should be explained (closed form formula, call to a library function, bootstrap, etc.)
        \item The assumptions made should be given (e.g., Normally distributed errors).
        \item It should be clear whether the error bar is the standard deviation or the standard error of the mean.
        \item It is OK to report 1-sigma error bars, but one should state it. The authors should preferably report a 2-sigma error bar than state that they have a 96\% CI, if the hypothesis of Normality of errors is not verified.
        \item For asymmetric distributions, the authors should be careful not to show in tables or figures symmetric error bars that would yield results that are out of range (e.g. negative error rates).
        \item If error bars are reported in tables or plots, The authors should explain in the text how they were calculated and reference the corresponding figures or tables in the text.
    \end{itemize}

\item {\bf Experiments Compute Resources}
    \item[] Question: For each experiment, does the paper provide sufficient information on the computer resources (type of compute workers, memory, time of execution) needed to reproduce the experiments?
    \item[] Answer: \answerYes{} 
    \item[] Justification: all the details mentioned in Section \ref{sec:numerical_results}
    \item[] Guidelines:
    \begin{itemize}
        \item The answer NA means that the paper does not include experiments.
        \item The paper should indicate the type of compute workers CPU or GPU, internal cluster, or cloud provider, including relevant memory and storage.
        \item The paper should provide the amount of compute required for each of the individual experimental runs as well as estimate the total compute. 
        \item The paper should disclose whether the full research project required more compute than the experiments reported in the paper (e.g., preliminary or failed experiments that didn't make it into the paper). 
    \end{itemize}
    
\item {\bf Code Of Ethics}
    \item[] Question: Does the research conducted in the paper conform, in every respect, with the NeurIPS Code of Ethics \url{https://neurips.cc/public/EthicsGuidelines}?
    \item[] Answer: \answerYes{} 
    \item[] Justification: the paper follows NeurIPS Code of Ethics.
    \item[] Guidelines:
    \begin{itemize}
        \item The answer NA means that the authors have not reviewed the NeurIPS Code of Ethics.
        \item If the authors answer No, they should explain the special circumstances that require a deviation from the Code of Ethics.
        \item The authors should make sure to preserve anonymity (e.g., if there is a special consideration due to laws or regulations in their jurisdiction).
    \end{itemize}

\item {\bf Broader Impacts}
    \item[] Question: Does the paper discuss both potential positive societal impacts and negative societal impacts of the work performed?
    \item[] Answer: \answerNA{} 
    \item[] Justification: the paper is mostly theoretical and does not have a direct societal impact.
    \item[] Guidelines:
    \begin{itemize}
        \item The answer NA means that there is no societal impact of the work performed.
        \item If the authors answer NA or No, they should explain why their work has no societal impact or why the paper does not address societal impact.
        \item Examples of negative societal impacts include potential malicious or unintended uses (e.g., disinformation, generating fake profiles, surveillance), fairness considerations (e.g., deployment of technologies that could make decisions that unfairly impact specific groups), privacy considerations, and security considerations.
        \item The conference expects that many papers will be foundational research and not tied to particular applications, let alone deployments. However, if there is a direct path to any negative applications, the authors should point it out. For example, it is legitimate to point out that an improvement in the quality of generative models could be used to generate deepfakes for disinformation. On the other hand, it is not needed to point out that a generic algorithm for optimizing neural networks could enable people to train models that generate Deepfakes faster.
        \item The authors should consider possible harms that could arise when the technology is being used as intended and functioning correctly, harms that could arise when the technology is being used as intended but gives incorrect results, and harms following from (intentional or unintentional) misuse of the technology.
        \item If there are negative societal impacts, the authors could also discuss possible mitigation strategies (e.g., gated release of models, providing defenses in addition to attacks, mechanisms for monitoring misuse, mechanisms to monitor how a system learns from feedback over time, improving the efficiency and accessibility of ML).
    \end{itemize}
    
\item {\bf Safeguards}
    \item[] Question: Does the paper describe safeguards that have been put in place for responsible release of data or models that have a high risk for misuse (e.g., pretrained language models, image generators, or scraped datasets)?
    \item[] Answer: \answerNA{} 
    \item[] Justification: we do not release data or models.
    \item[] Guidelines:
    \begin{itemize}
        \item The answer NA means that the paper poses no such risks.
        \item Released models that have a high risk for misuse or dual-use should be released with necessary safeguards to allow for controlled use of the model, for example by requiring that users adhere to usage guidelines or restrictions to access the model or implementing safety filters. 
        \item Datasets that have been scraped from the Internet could pose safety risks. The authors should describe how they avoided releasing unsafe images.
        \item We recognize that providing effective safeguards is challenging, and many papers do not require this, but we encourage authors to take this into account and make a best faith effort.
    \end{itemize}

\item {\bf Licenses for existing assets}
    \item[] Question: Are the creators or original owners of assets (e.g., code, data, models), used in the paper, properly credited and are the license and terms of use explicitly mentioned and properly respected?
    \item[] Answer: \answerYes{} 
    \item[] Justification: see Section~\ref{sec:numerical_results}.
    \item[] Guidelines:
    \begin{itemize}
        \item The answer NA means that the paper does not use existing assets.
        \item The authors should cite the original paper that produced the code package or dataset.
        \item The authors should state which version of the asset is used and, if possible, include a URL.
        \item The name of the license (e.g., CC-BY 4.0) should be included for each asset.
        \item For scraped data from a particular source (e.g., website), the copyright and terms of service of that source should be provided.
        \item If assets are released, the license, copyright information, and terms of use in the package should be provided. For popular datasets, \url{paperswithcode.com/datasets} has curated licenses for some datasets. Their licensing guide can help determine the license of a dataset.
        \item For existing datasets that are re-packaged, both the original license and the license of the derived asset (if it has changed) should be provided.
        \item If this information is not available online, the authors are encouraged to reach out to the asset's creators.
    \end{itemize}

\item {\bf New Assets}
    \item[] Question: Are new assets introduced in the paper well documented and is the documentation provided alongside the assets?
    \item[] Answer: \answerNA{} 
    \item[] Justification: not applicable.
    \item[] Guidelines:
    \begin{itemize}
        \item The answer NA means that the paper does not release new assets.
        \item Researchers should communicate the details of the dataset/code/model as part of their submissions via structured templates. This includes details about training, license, limitations, etc. 
        \item The paper should discuss whether and how consent was obtained from people whose asset is used.
        \item At submission time, remember to anonymize your assets (if applicable). You can either create an anonymized URL or include an anonymized zip file.
    \end{itemize}

\item {\bf Crowdsourcing and Research with Human Subjects}
    \item[] Question: For crowdsourcing experiments and research with human subjects, does the paper include the full text of instructions given to participants and screenshots, if applicable, as well as details about compensation (if any)? 
    \item[] Answer: \answerNA{} 
    \item[] Justification: not applicable.
    \item[] Guidelines:
    \begin{itemize}
        \item The answer NA means that the paper does not involve crowdsourcing nor research with human subjects.
        \item Including this information in the supplemental material is fine, but if the main contribution of the paper involves human subjects, then as much detail as possible should be included in the main paper. 
        \item According to the NeurIPS Code of Ethics, workers involved in data collection, curation, or other labor should be paid at least the minimum wage in the country of the data collector. 
    \end{itemize}

\item {\bf Institutional Review Board (IRB) Approvals or Equivalent for Research with Human Subjects}
    \item[] Question: Does the paper describe potential risks incurred by study participants, whether such risks were disclosed to the subjects, and whether Institutional Review Board (IRB) approvals (or an equivalent approval/review based on the requirements of your country or institution) were obtained?
    \item[] Answer: \answerNA{} 
    \item[] Justification: not applicable.
    \item[] Guidelines:
    \begin{itemize}
        \item The answer NA means that the paper does not involve crowdsourcing nor research with human subjects.
        \item Depending on the country in which research is conducted, IRB approval (or equivalent) may be required for any human subjects research. If you obtained IRB approval, you should clearly state this in the paper. 
        \item We recognize that the procedures for this may vary significantly between institutions and locations, and we expect authors to adhere to the NeurIPS Code of Ethics and the guidelines for their institution. 
        \item For initial submissions, do not include any information that would break anonymity (if applicable), such as the institution conducting the review.
    \end{itemize}

\end{enumerate}

\end{document}